\def\extended{true}

\pdfsuppresswarningpagegroup=1

\documentclass[conference]{IEEEtran}
\IEEEoverridecommandlockouts
\usepackage{cite}
\usepackage{amsmath,amssymb,amsfonts}
\usepackage{graphicx}
\usepackage{textcomp}
\usepackage[english]{babel}
\usepackage{hyperref}

\newcommand{\ifextended}[2]{\ifdefined\extended{#1}\else{#2}\fi}

\ifdefined\extended
\newcommand{\extref}[1]{\ref{#1}}
\newcommand{\extpageref}[1]{\pageref{#1}}
\else
\usepackage{xr}
\externaldocument[ext-]{extended}
\newcommand{\extref}[1]{\ref{ext-#1}}
\newcommand{\extpageref}[1]{\pageref{ext-#1}}
\fi

\usepackage{siunitx}
\usepackage{booktabs}
\usepackage{scalefnt}

\usepackage{amsthm,mathtools}
\allowdisplaybreaks

\newtheorem{proposition}{Proposition}
\newtheorem{definition}{Definition}

\newcommand{\eqdef}     {\stackrel{{\textrm{\rm\tiny def}}}{=}}

\DeclareMathOperator*{\argmax}{arg\,max}

\usepackage[normalem]{ulem}
\newcommand{\hide}[1]{}

\graphicspath{{./}}

\usepackage{flushend}

\usepackage[dvipsnames]{xcolor}
\definecolor{pink}{rgb}{0.858, 0.188, 0.478}
\newcommand{\persComment}[3]{
  \ifmmode
  \text{\textcolor{#3}{[#2] \em #1}}
  \else
  \textcolor{#3}{[#2] \em #1}
  \fi
}

\def\ie{{\em i.e.}\xspace}
\def\eg{{\em e.g.}\xspace}
\def\cf{{\em cf.}\xspace}

\def\reals{{\mathbb R}}
\def\cS{{\cal S}}
\def\cA{{\cal A}}

\def\cI{{\cal I}}
\def\cZ{{\cal Z}}
\def\cE{{\cal E}}

\def\nNI{\#ni} \def\fsc{\mathit{fsc}} \def\FSC{\mathcal{FSC}} \newcommand{\bb}[2]{b_{#1}^{#2}} 

\usepackage[ruled,vlined,linesnumbered]{algorithm2e}
\SetKwProg{Fct}{Fct}{}{}

\usepackage[numbers]{natbib}
\bibliographystyle{IEEEtranN}

\newcommand{\infJESP}[1][]{Inf-JESP{#1}\xspace}

\begin{document}

\title{Solving infinite-horizon Dec-POMDPs\\
  using Finite State Controllers within JESP
\thanks{This work was supported by the French National Research
    Agency (ANR) through the “Flying Coworker” Project under Grant 18-CE33-0001.}
}

\author{\IEEEauthorblockN{Yang You, Vincent Thomas, Francis Colas and Olivier Buffet}
\IEEEauthorblockA{Université de Lorraine, INRIA, CNRS, LORIA\\
      F-54000 Nancy, France \\Email: firstname.lastname@loria.fr}
  }

\maketitle

\begin{abstract}
{\em
This paper looks at solving collaborative planning problems formalized as Decentralized POMDPs (Dec-POMDPs) by searching for Nash equilibria, \ie{}, situations where each agent's policy is a best response to the other agents' (fixed) policies.
While the Joint Equilibrium-based Search for Policies (JESP) algorithm does this in the finite-horizon setting relying on policy trees, we propose here to adapt it to infinite-horizon Dec-POMDPs by using finite state controller (FSC) policy representations.
In this article, we
(1) explain how to turn a Dec-POMDP with $N-1$ fixed FSCs into an infinite-horizon POMDP whose solution is an $N^\text{th}$ agent best response;
(2) propose a JESP variant, called \infJESP, using this to solve infinite-horizon Dec-POMDPs;
(3) introduce heuristic initializations for JESP aiming at leading to good solutions; and
(4) conduct experiments on state-of-the-art benchmark problems to evaluate our approach.
}
\end{abstract}

\begin{IEEEkeywords}
Dec-POMDP, Nash equilibria, FSC, JESP
  \end{IEEEkeywords}

\ifextended{\noindent [Note: This extended version of the ICTAI 2021 submission
  contains supplemental material in appendices.]}{}

\section{Introduction}

Decentralized Partially Observable Markov Decision Problems (Dec-POMDPs)  represent multi-agent sequential decision problems where the objective is to derive the policies of multiple agents so that their decentralized execution maximizes the average cumulative reward.
Each agent policy can rely only on  his individual history, namely the sequence of his past actions and observations.

Solving a finite-horizon Dec-POMDP has been proven to be NEXP in the worst case \cite{Bernstein02}, even for two agents, constraining the possible efficiency of optimal solvers for generic Dec-POMDPs.
The main difficulties of solving a Dec-POMDP lie in two facts: (i) the state of the system evolves according to the actions of all the agents; and (ii) the action performed by each agent should be based only on his own history.
Thus, all policies are interdependent: each agent's optimal decision depends on
the other agents' possible current histories and future policies.

To circumvent these interdependencies during the optimization process,  the JESP algorithm (Joint Equilibrium-Based Search for Policies) \cite{NaiTamYokPynMar-ijcai03} searches for Nash equilibrium solutions, \ie{}, each agent's policy being a {\em best response} to the other agents' policies.
It does so in the finite-horizon setting (relying on tree
representations for policies) by individually optimizing each agent's
policy one after the other, while fixing the other agent's policies,
until convergence, \ie{}, repeating the process until no improvement
is possible.
However, this algorithm faces two main drawbacks:
(1) the resulting Nash equilibria are local, not global, optima; and
(2) it addresses only finite-horizon problems.

In this paper, we address this second drawback and propose a way to solve infinite-horizon problems through a JESP approach, called \textit{\infJESP} (infinite-horizon JESP).
Its starting point is to rely not on policy trees, but on finite state controllers.
To that end, we provide a way to build the POMDP faced by some agent
when fixing the other agents' FSC policies. From there, we use a POMDP solver to find individual best-response
FSCs, and integrate this step into a JESP algorithmic scheme.

To be more precise, we extend and improve the JESP method on three aspects:
(1) tree representations are replaced by FSC representations to address infinite-horizon problems and to build each intermediate POMDP without having to consider distributions over the possible histories of the other agents (but only the internal nodes of their FSCs);
(2) current state-of-the-art POMDP solvers are used at each step (in this paper, we used SARSOP \cite{KurHsuLee-rss08}) to approximate the optimal value function of a POMDP, and an FSC is derived from the resulting approximation \cite{GrzPouYanHoe-ToCyb14};
(3) two novel (deterministic) heuristic initialization methods for JESP are provided, where individual FSCs are extracted from a joint policy obtained by solving a simpler Multi-agent POMDP (MPOMDP) \cite{Pynadath-jair02} in which all agents are controlled by a common entity that has access to all received observations.
By following these directions, we expect that using FSCs will help to
build policies which both (i) have compact representations and (ii) are easier to execute and to understand (as in
\cite{GrzPouYanHoe-ToCyb14}) than in classic JESP.

In Section~\ref{sec:RelatedWork}, we discuss related works about finite state controllers and existing Dec-POMDP solution methods.
Sec.~\ref{sec:Background} formally defines Dec-POMDPs and FSCs.
Sec.~\ref{sec:Contribution} (1) explains how to combine FSCs with a Dec-POMDP to generate the POMDP required at each \infJESP iteration;
(2) then describes the overall \infJESP algorithm dedicated to solving infinite-horizon Dec-POMDPs; and
(3) presents a heuristic initialization method for the JESP family of algorithms.
Finally, Sec.~\ref{sec:Experiments} presents empirical results and analyses  before  concluding. 

\ifextended{}{ Note: Complemental appendices appear in an extended
  version \cite{ictai21ext}. }

\section{Related Work}
\label{sec:RelatedWork}

\paragraph{Dec-POMDP techniques}

A first type of approach for solving Dec-POMDPs consists in
transforming the Dec-POMDP problem into a deterministic shortest path problem, as in Multi-Agent A* (MAA*)
\cite{SzeChaZil-uai05}, or even a Markov decision process with a state space of sufficient
statistics, as in FB-HSVI \cite{DibAmaBufCha-jair16} and PBVI-BB \cite{MacIsb-nips13}, which both rely on point-based solvers.
These approaches can give solutions as close to optimal as wanted, but the size of the state space for the corresponding problem blows up as the number of actions and observations grows, requiring a lot of computational resources.

A second type of approach consists in exploring the joint policy space by simultaneously optimizing the parametrized policies of all the agents.
For infinite-horizon Dec-POMDPs, these approaches represent each agent's (bounded-memory) policy compactly as an FSC, making it possible to directly search in a space of fixed-size FSCs.
\cite{AmaBerZil-jaamas10} proposed to directly optimize the FSC parameters through a non-linear programming (NLP) problem.
Using FSCs in the form of Mealy machines instead of Moore machines led to the improved MealyNLP solver \cite{AmaBonZil-aaai10}.
Other approaches \cite{KumZilTou-jair15, PajPel-ijcai11,PajPel-nips11}
address Dec-POMDPs as an inference problem consisting in estimating
the best parameters of the FSCs to maximize the probability of
generating rewards, leading to sub-optimal joint policies (due to
limitations of Expectation-Maximization).
In particular, PeriEM \cite{PajPel-nips11} works with periodic FSCs (as does Peri, a policy-graph improvement algorithm).
As policy-gradient algorithms for POMDPs (i) naturally extend to Dec-POMDPs \cite{TaoBaxWea-icml01}, and (ii) can serve to optimize the parameters of an FSC
\cite{MeuKimKaeCas-uai99,Aberdeen03PhD}, they would allow optimizing
multiple FSCs in a multi-agent setting as well.

A third type of approach uses heuristics to build a joint policy, as JESP (Joint Equilibrium-Based Search for Policies) \cite{NaiTamYokPynMar-ijcai03},
which is based on the observation that optimal joint policies are Nash equilibria.
JESP proposes to search for Nash equilibria by optimizing one agent's policy at a time, fixing the policies of the other agents, until no more improvement is possible.
At each iteration, a (single-agent) partially observable Markov decision process (POMDP) is faced, combining the dynamics of the Dec-POMDP and the known policies of the other agents.
The hidden state of this problem combines the environment state and other agents' internal state.
By turning the original problem into a sequence of single-agent
problems, JESP highly reduces the complexity of the resolution, but may fall in local optima.
Moreover, in JESP, POMDP solving was performed by either an exhaustive search or dynamic programming with policies represented by policy trees.
In this case, the other agents' internal states are their observation histories, so that the POMDP state space grows exponentially with the (necessarily finite) horizon.
We  adapt JESP to infinite-horizon problems by using an FSC policy representations.

Among Dec-POMDP approaches, Dec-BPI (Decentralized Bounded Policy Iteration) \cite{BerHanZil-ijcai05,BerAmaHanZil-jair09} is closely related to our proposal.
It uses a stochastic FSC representation with an added correlation device allowing the agents to share the same pseudo-random numbers, and proposes an approach similar to JESP by improving the parameters of one FSC node (or the correlation device) at a time through linear programming.
We instead propose to build at each iteration a totally new FSC for one of the agents.

\paragraph{FSC policy representation and evaluation}

This section discusses other related works, focusing on FSC representations for
infinite-horizon POMDP policies.

One of the main contributions in this context is policy iteration for POMDPs \cite{Hansen-nips97,Hansen-uai98}.
In these papers, \citeauthor{Hansen-nips97} proposes a policy iteration algorithm based on FSC policies (as well as a heuristic search).
Following the policy iteration algorithmic scheme, policy iteration for POMDPs relies on two steps:
(i) an evaluation step consisting in assessing the value of the current FSC policy, and
(ii) an improvement step upgrading the FSC policy based on this evaluation through the addition of new nodes and the pruning or replacement of dominated ones.
It must be noted that, while the policy improvement step can be very time-consuming, the policy evaluation step can be done efficiently by solving a system of linear equations \cite{Hansen-nips97}.
In the present article, we reuse the exact same technique to assess the value of the FSCs we obtain at each JESP iteration.

Note that policy gradients can also be applied to optimize a
parameterized (stochastic) FSC of fixed-size
\mbox{\cite{MeuKimKaeCas-uai99,Aberdeen03PhD}}.

More recently, \citeauthor{GrzPouYanHoe-ToCyb14} \cite{GrzPouYanHoe-ToCyb14} explained how to derive (and compress) an FSC from a value function. This allows to use state-of-the-art POMDP solvers to build a value function while expressing the final policy as an FSC. Here, we use similar techniques to
(i) build, at each iteration of \infJESP, an FSC
from a solution $\alpha$-vector set (Sec.~\ref{sec:FSCCompression}), and
(ii) equip the agents with initial policies (Sec.~\ref{sec:MPOMDPInitialization}).

\section{Background}
\label{sec:Background}

\subsection{Dec-POMDPs}

The problem of finding optimal collaborative behaviors for a group of
agents under stochastic dynamics and partial observability is
typically formalized as a {\em decentralized partially observable
  Markov decision process} (Dec-POMDP).

\begin{definition}
  A {\em Dec-POMDP} with $|\cI|$ agents is represented as a tuple $M \equiv \langle \cI, \cS, \cA, \Omega, T, O, R, b_0, H, \gamma \rangle$, where:
$\cI = \{1, \dots, |\cI|\}$ is a finite set of {\em agents};
$\cS$ is a finite set of {\em states};
$\cA = \bigtimes_i \cA^i$ is the finite set of joint actions, with $\cA^i$ the set of agent $i$'s {\em actions}; $\Omega = \bigtimes_i \Omega^i$ is the finite set of joint
  observations, with $\Omega^i$ the set of agent $i$'s {\em observations}; $T: \cS \times \cA \times \cS \to \reals$ is the {\em transition
    function}, with $T(s,a,s')$ the probability of transiting from $s$ to $s'$ if $a$ is
  performed;
$O: \cA \times \cS \times \Omega \to \reals$ is the
  {\em observation function}, with $O(a,s',o)$ the probability of observing $o$ if $a$ is performed and
  the next state is $s'$;
$R: \cS \times \cA \to \mathbb{R}$ is the {\em reward function},
  with $R(s,a)$ the immediate reward for executing $a$ in $s$;
$b_0$ is the {\em initial probability distribution} over states;
$H \in \mathbb{N} \cup \{\infty\}$ is the (possibly infinite) {\em time horizon};
$\gamma \in (0,1)$ is the {\em discount factor} applied to future rewards.
\end{definition}

An agent's $i$ action {\em policy} $\pi^i$ maps its possible
action-observation histories to actions.
The objective is then to find a joint policy
$\pi=\langle \pi^1, \dots, \pi^{|\cS|} \rangle$ that maximizes the expected discounted return from $b_0$:
\begin{align*}
  V^{\pi}_H(b_0)
  & \eqdef \mathbb{E}\left[ \sum_{t=0}^{H-1} \gamma^{-t} r(S_t, A_t) \mid S_0 \sim b_0, \pi \right].
\end{align*}

\subsection{POMDPs}

In this work, we rely on optimal Dec-POMDP solutions being {\em
  equilibria}, \ie, situations where each agent $i$ follows a {\em
  best-response} policy given the other agents' fixed policies noted
$\pi^{\neq i}$, what induces a single-agent Dec-POMDP, \ie, a POMDP.
In a POMDP, an optimal policy $\pi^*$ exists whose input is the belief
$b$, \ie, the probability distribution over states (or {\em belief}
$b$) given the current action-observation history.
For finite $h$, the optimal value function (which allows deriving
$\pi^*$) is recursively defined as:
\begin{align*}
  V^*_h(b)
  & = \max_a \left[r(b, a) + \gamma \sum_{o} Pr(o \mid b, a) V^*_{h-1}(b^{a,o}) \right].
\end{align*}
where (i) $r(b,a) = \sum_s b(s)\cdot r(s,a)$, (ii) $Pr(o | b, \pi(b))$ depends on the dynamics, and (iii) $b^{a,o}$ is the belief updated upon performing $a$ and
perceiving $o$.
For finite $h$, $V^*_h$ is known to be piece-wise linear and convex
(PWLC) in $b$.
For infinite $h$, $V^* (=V^*_\infty)$ can thus be
approximated by an upper envelope of hyperplanes---called
$\alpha$-vectors $\alpha \in \Gamma$.

\subsection{Finite State Controllers}
\label{sec:FSC_def}

In POMDPs as in Dec-POMDPs, solution policies can also be sought for in
the form of {\em finite state controllers} (FSC) (also called {\em
  policy graphs} \cite{MeuKimKaeCas-uai99}), \ie, automata whose
transitions from one internal state to the next depend on the received
observations and generate the actions to be performed.

\begin{definition}
  For some POMDP's sets $\cA$ and $\Omega$, an {\em FSC} is represented as a tuple
  $fsc \equiv \langle N, \eta, \psi \rangle$, where:
  \begin{itemize}
  \item $N$ is a finite set of (internal) nodes, with $n_0$ the start node; \item $\eta: N \times \Omega \times N \to \reals$ is the transition function between nodes of the FSC; $\eta(n,o,n')$ is the probability of moving from node $n$ to $n'$ if $o'$ is observed; the notation $n'=\eta(n,o)$ is also used when this transition is deterministic;
  \item $\psi: N \times \cA \to \reals$ is the action selection function of the FSC; $\psi(n,a)$ is the probability to choose action $a \in \cA$ in node $n$; the notation $a=\psi(n)$ is also used when this function is deterministic.
  \end{itemize}
\end{definition}

A deterministic FSC's value function (\ie,
$\eta$ and $\psi$ being both deterministic) is the solution of the
following system of linear equations, with one $\alpha$-vector per
node $n$ \cite{Hansen-nips97}:
\begin{align}
  \label{eqn:PolicyEvaluation}
  \alpha_{s}^{n} 
  & = R(s, a_n)+\gamma \sum_{s', o}T(s, a_n, s')O(a_n,s',o) \alpha_{s'}^{\eta(n, o)},
\end{align}
where $a_n\eqdef \psi(n)$.
Using the fixed point theorem, a solution can be found using an
iterative process typically stopped when the Bellman residual (the
largest change in value) is less than a threshold $\epsilon$, so that
the estimation error is less than $ \epsilon \over {1-\gamma}$.

\section{Infinite-horizon JESP}
\label{sec:Contribution}

\infJESP relies on a main local search, which is typically
randomly restarted  multiple times to converge to different local optima.
This local search, presented in Sec.~\ref{sec:infJESP},
relies on iteratively (i) defining a POMDP for each agent based on other agents' policies (Sec.~\ref{sec:BRPOMDP}), and (ii) solving it to extract and evaluate an associated FSC (Sec.~\ref{sec:FSCCompression}).
We present heuristic initializations for \infJESP in Sec.~\ref{sec:MPOMDPInitialization}.

\subsection{Main Algorithm} \label{sec:infJESP}

Each agent's policy is represented as a deterministic FSC (of which
there is a finite number).
To control the computational cost at each iteration, the size of
solution FSCs is bounded by a parameter $K \in \mathbb{N}^*$.
The local search thus starts with $|\cI|$ randomly generated $K$-FSCs in $\fsc$.
Then, it loops over the agents, each iteration attempting to improve
an agent $i$'s policy by finding (line~\ref{codes:solvePOMDP}) a $K$-FSC
$\fsc'_i$ that is a best response to the current (fixed)
FSCs $\fsc_{\neq i}$ of the $|\cI|-1$ other agents (denoted $\neq i$).
Line~\ref{codes:PolicyEval} relies on Eq.~\ref{eqn:PolicyEvaluation}
to evaluate $\langle \fsc'_i, \fsc_{\neq i}\rangle$ at
$b_0$, unless the POMDP solver at line~\ref{codes:solvePOMDP}
provides this information.
Then, if an improved solution has been found, $\fsc'_i$ replaces $\fsc_i$ in $\fsc$.
The process stops if the number of consecutive iterations without
improvement, $\nNI$, reaches $|\cI|$.

\begin{algorithm}
  \caption{$\infty$-horizon JESP}
  \label{alg:inftyHJESP}
\DontPrintSemicolon

  \SetInd{.3em}{.6em}
  \scalefont{.9}
  
  \SetKwFunction{LocalSearch}{{\bf LocalSearch}}
  \SetKwFunction{SolveToFSC}{{\bf Solve2FSC}}
  
  [Input:] $K$: FSC size  $\mid$ $\fsc$: initial FSCs

  \Fct{\LocalSearch{$K, \fsc \eqdef \langle \fsc_1, \dots, \fsc_{|\cI|} \rangle$ } }{
    $v_{bestL} \gets eval(\fsc)$ \;
    $\nNI \gets 0$   {\scriptsize\tcp*{\#(iterations w/o improvement)}} 
    $i\gets 1$  {\scriptsize\tcp*{Id of current agent}}
    \Repeat{$\nNI=|\cI|$}{
      $\fsc'_i \gets $\SolveToFSC{$\fsc_{\neq i}$, $K$}\; \label{codes:solvePOMDP}
      $v \gets eval(\fsc'_i, fsc_{\neq i})$\; \label{codes:PolicyEval}
      \eIf{$v > v_{bestL}$}{
        $fsc_i \gets \fsc'_i$\;
        $v_{bestL} \gets v$\;
        $\nNI \gets 0$\;
      }{ $\nNI \gets \nNI+1$\;
      }
$i \gets (i \mod |\cI|) + 1$\;
    }\Return{$\langle \fsc, v_{bestL} \rangle$}
  }

\end{algorithm}

\paragraph{Properties}

For agent $i$, each of its $K$ nodes is attached to an action in $\cA_i$, and each of its $K\times|\Omega|$ edges is attached to a node, so that the number of {\em deterministic} FSCs $|\FSC_i|$ is
at most
$|\cA_i|^K \cdot K^{K\times|\Omega|}$.\footnote{The
  exact number is smaller due to symmetries and because, in some FSCs,
  not all internal nodes are reachable.}
Thus, assuming an optimal POMDP solver leads to the following
properties.

\begin{proposition}
  \infJESP's local search converges in finitely many iterations to a
  Nash equilibrium.
\end{proposition}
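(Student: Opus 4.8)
The plan is to combine a monotonicity/finiteness argument for termination with a frozen-complement argument for the equilibrium property. First I would establish the loop invariant that $v_{bestL}$ always equals the value $eval(\fsc)$ of the current joint policy: it is initialized that way, and whenever an agent's FSC is replaced the code simultaneously sets $v_{bestL}$ to the value of the new joint policy, whereas on a non-improving iteration neither $\fsc$ nor $v_{bestL}$ changes. This invariant is what lets the local per-agent acceptance test be read as a statement about the global joint value.

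Next, for finite termination I would observe that $v_{bestL}$ is non-decreasing and that it strictly increases on every accepted update, since the acceptance condition is $v > v_{bestL}$. Because each agent's $K$-FSC is drawn from a finite set (of size at most $|\cA_i|^K \cdot K^{K\times|\Omega|}$), the joint $K$-FSC space is finite, so the set of attainable joint values $\{eval(\fsc)\}$ is finite. A strictly increasing sequence taking values in a finite set can have only finitely many terms, hence only finitely many improving iterations ever occur. After the last improving iteration, $\nNI$ is incremented on every subsequent iteration and never reset, so it reaches $|\cI|$ within $|\cI|$ iterations and the loop halts; the total iteration count is therefore finite.

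For the equilibrium property I would exploit that the final $|\cI|$ iterations are, by the stopping rule, all non-improving, so the joint policy $\fsc$ stays frozen throughout them. Since the index update $i \gets (i \bmod |\cI|)+1$ visits each agent exactly once in any window of $|\cI|$ consecutive iterations, every agent is tested exactly once against this same fixed complement $\fsc_{\neq i}$. Invoking the assumed optimal solver, the $\fsc'_i$ returned by Solve2FSC attains the best value $v$ achievable by any $K$-FSC against $\fsc_{\neq i}$. The non-improvement test gives $v \le v_{bestL}$, whereas optimality of $v$ together with feasibility of the incumbent $\fsc_i$ gives $v \ge eval(\fsc_i,\fsc_{\neq i}) = v_{bestL}$; hence $v = v_{bestL}$ and $\fsc_i$ itself already attains the best-response value. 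As this holds simultaneously for every agent against the common frozen complement, the returned joint policy is a Nash equilibrium within the class of $K$-node FSCs.

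The main obstacle, and the point I would state most carefully, is the interplay between scheduling and freezing in the last step: the equilibrium conclusion relies on all $|\cI|$ best-response checks being carried out against the identical joint policy, which is guaranteed only because a full non-improving round leaves $\fsc$ unchanged and because the round-robin schedule covers every agent in such a round. I would also flag explicitly that \emph{best response}, and hence \emph{Nash equilibrium}, are meant relative to the restricted $K$-FSC strategy class, so the optimal-solver hypothesis should be understood as returning an optimal best-response $K$-FSC rather than an unrestricted POMDP optimum.
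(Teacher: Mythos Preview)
Your proposal is correct and follows essentially the same approach as the paper's proof: finiteness of the joint $K$-FSC space plus strict monotone improvement gives termination, and the stopping criterion (a full non-improving round) yields the Nash property. The paper's own proof is a terse two-sentence sketch that leaves implicit exactly the points you spell out carefully---the loop invariant on $v_{bestL}$, the round-robin coverage of all agents against a frozen complement, and the restriction of ``best response'' to the $K$-FSC class---so your version is a strict elaboration rather than a different route.
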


\begin{proof}
  The search only accepts increasingly better solutions, so that the
  number of iterations (over all agents) is at most the number of
  possible solutions: $|\FSC| = \prod_i |\FSC_i|$.

  The search stops when each agent's FSC is a best response to the
  other agents' FSCs, \ie, in a Nash equilibrium.
\end{proof}

While  these equilibria are only local optima,
allowing for infinitely many random restarts guarantees converging to
a global optimum with probability $1$.
Of course, the set of Nash equilibria depends on the set of policies
at hand, thus on the parameter $K$ in our setting.
Increasing $K$ just allows for more policies, and thus possibly better
Nash equilibria.

In practice (see Sec.~\ref{sec:BRPOMDP}), we use a sub-optimal POMDP
solver in \SolveToFSC (line~\ref{codes:solvePOMDP}) in which FSC sizes
are implicitly bounded ($K$ is not actually used) because the solving time is bounded.
If this POMDP solver returns a solution $\fsc'_i$ worse than $\fsc_i$,
it will be ignored, so that monotonic improvements are preserved, and
the search still necessarily terminates in finite time.
Final solutions may be close to Nash equilibria if the POMDP solver returns
$\epsilon$-optimal solutions.

\subsection{Best-Response POMDP For Agent $i$}
\label{sec:BRPOMDP}

\paragraph{Finite-horizon Solution (JESP)}

In JESP, when reasoning about an agent $i$'s policy while considering that other agent's policies $\pi_{\neq i} $ are known and fixed, agent $i$ maintains a belief $b^{t}_{JESP}$ over its own state $e^t \eqdef \langle  s^t, \vec{\omega}_{\neq i}^{t} \rangle$, with $s^t$ the current state and $\vec{\omega}_{\neq i}^{t}$ the observation histories of other agents. 
This belief is a sufficient statistics for planning as it allows
predicting the system's evolution (including other agents) as well
as future expected rewards.

However, in infinite-horizon problems, the number of observation
histories $ \vec\omega_{\neq i}^{t}$ grows exponentially with time,
making for an infinite state space.
With FSC policies, agent $i$ can reason about other agents' internal nodes instead.

\paragraph{Infinite-horizon Solution}

Multiple best-response POMDPs can be designed (see \ifextended{Appendix~\extref{app:POMDPformalization} on page~\extpageref{app:POMDPformalization}}{\cite[Appendix~\extref{app:POMDPformalization}]{ictai21ext}}).
Here, the {\em extended} state $e^t \in \cE$ contains
(i) $s^t$, the current state of the Dec-POMDP problem, 
(ii) $n_{\neq i}^{t} \equiv \langle n_{1}^{t}, \dots, n_{i-1}^{t}, n_{i+1}^{t}, \dots, n_{n}^{t} \rangle$, the nodes of the $|\cI| -1$ other agents' (agents $\neq i$) FSCs at the current time step, and 
(iii) $o_{i}^{t} $, agent $i$'s current observation.
We thus have $\cE \eqdef \cS \times N_{\neq i} \times \Omega_{i}$.
Given an action $a_i^t$, the extended state
$e^t\equiv \langle s^t, n_{\neq i}^t, o_i^t\rangle$ evolves according
to the following steps (see Figure~\ref{BRModel}):
(1) each agent $j \neq i$ selects its action $a_j$ based on its
current node;
(2) $s^t$ transitions to $s^{t+1}$ according to $T$ under joint action
$a^t \equiv \langle a_i^t, a_{\neq i}^t\rangle$; and
the FSC nodes of agents $j$ (including $i$) evolve jointly (possibly
not independently) according to their observations $o_j^{t+1}$, which
depend on $s^{t+1}$ and the joint action $a^t$.

\begin{figure}\centering
     \includegraphics[width=1 \linewidth]{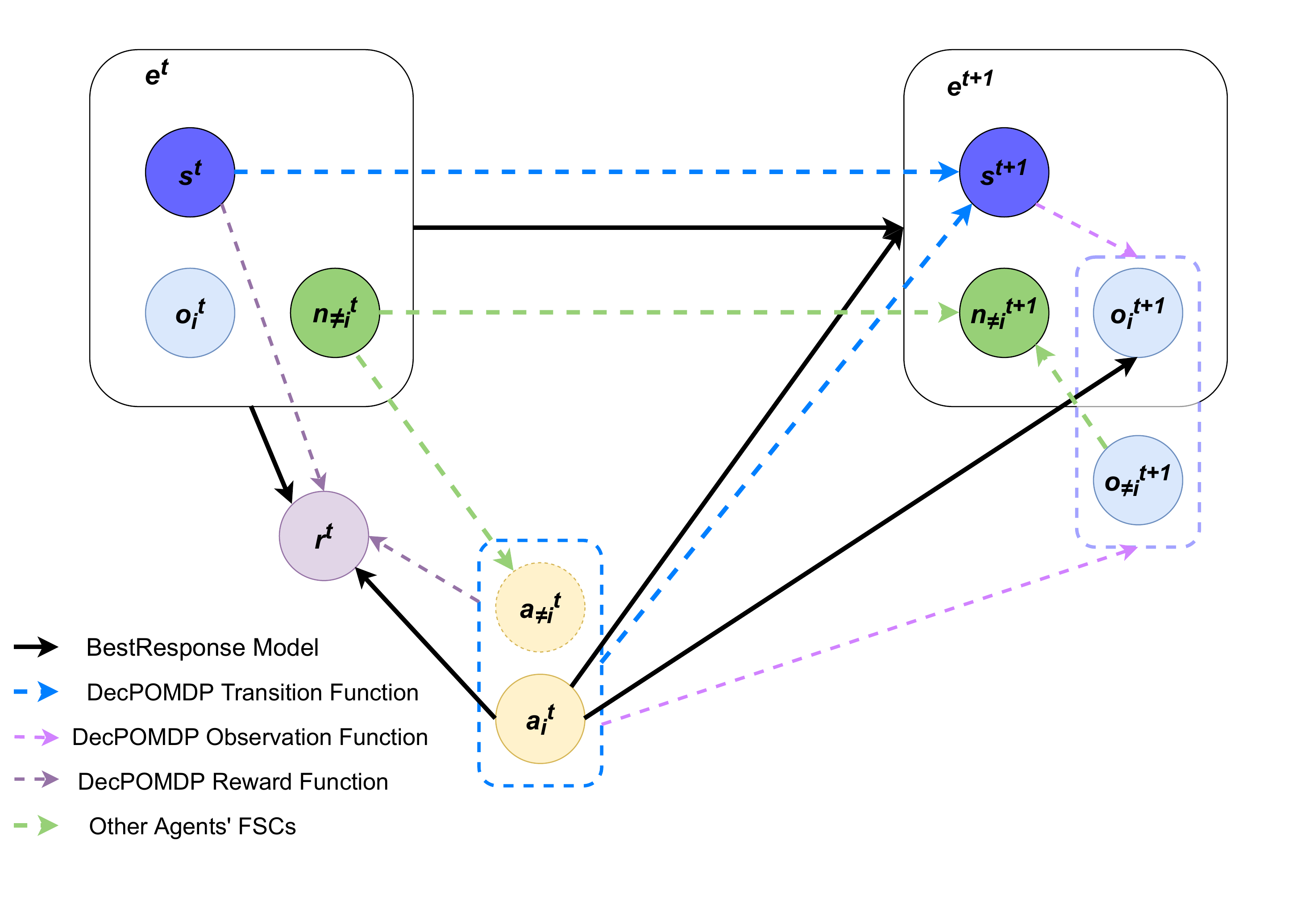}
     \caption{The decision-making structure for the best-response POMDP: The black arrows give a standard POMDP model; 
The purple dashed arrows show the transition function $T$ of the Dec-POMDP, \ie, the probability of next state $s^{t+1}$ given on current state $s^t$ and joint action $\langle a_{\neq i}^{t}, a^t_{i} \rangle$;
The blue dashed arrows show the observation function $O$ of the Dec-POMDP, \ie, the probability of joint observations $\langle o^{t+1}_{\neq i},o^{t+1}_{i} \rangle$ given the next state $s^{t+1}$ and joint action $\langle a_{\neq i}^{t}, a^t_{i} \rangle$;
The pink dashed arrows show the reward function;
The green arrows show the evolution of agents $\neq i$' FSC nodes.
 }
     \label{BRModel}
\end{figure}  

This design choice for $e^t$ induces a POMDP because of the following
properties: (i) it induces a Markov process controlled by the action (see
dynamics below); (ii) the observation $o^t_i$ depends on $a^t_i$ and $e^{t+1}$; and (iii) the reward depends on $e^t$ and $a^t_i$.
Indeed, deriving the transition, observation and reward functions for this POMDP (\cf App.~\extref{app:POMDPformalization} \ifextended{}{in \cite{ictai21ext}}) leads to:
\begin{align*}
  &  T_e(e^t, a^t_i, e^{t+1})  = Pr(e^{t+1}| e^t, a^t_i) \\
  & \quad = \sum_{o^{t+1}_{\neq i}}T(s^t, \langle \psi_{\neq i}(n_{\neq i}^{t}), a^t_{i} \rangle, s^{t+1}) 
  \cdot \eta_{\neq i}(n_{\neq i}^{t},o^{t+1}_{\neq i},n_{\neq i}^{t+1}) \\
  & \qquad \cdot O(s^{t+1}, \langle \psi_{\neq i}(n_{\neq i}^{t}), a^t_{i} \rangle, \langle o^{t+1}_{\neq i},o^{t+1}_{i} \rangle),
  \\ & O_e(a^t_i, e^{t+1}_i, o^{t+1}_i)
  = Pr( o^{t+1}_i | a^t_i, e^{t+1}_i) \\
  & \quad = Pr( o^{t+1}_i | a^t_i, \langle s^{t+1}, n^{t+1}_{\neq i}, \tilde o^{t+1}_i \rangle) = \mathbf{1}_{o^{t+1}_i = \tilde  o^{t+1}_i},
  \\
  & r_e(e^t, a^t_i) =  r(s^t, a^t_i,\psi_{\neq i}(n_{\neq i}^{t}) ),
\end{align*}
where $\eta_{\neq i}(n_{\neq i}^{t},o^{t+1}_{\neq i},n_{\neq i}^{t+1})  = \prod_{j\neq i} \eta(n_j^{t}, \tilde o_j^{t+1}, n_j^{t+1}) $ and $\psi_{\neq i}(n_{\neq i}^{t}) = a^t_{\neq i}$. 

Note that the full observability of state variables such as $o^t_i$
can be exploited \cite{OngPngHsuLee-rss09,AraThoBufCha-ICTAI10}, what
we leave to the POMDP solver.
Also, we make $\cE$ smaller through {\em state elimination}, \ie, only keeping extended states
reachable from the initial belief.

\subsection{Obtaining and Evaluating $\fsc_i$}
\label{sec:FSCCompression}

\SolveToFSC could rely on a solver that directly returns an FSC
\cite{Hansen-nips97}.
Instead, we choose to rely on a modern point-based approach (\eg,
PBVI, HSVI or SARSOP
\cite{PinGorThr-ijcai03,SmiSim-uai04,KurHsuLee-rss08}) returning an
$\alpha$-vector set $\Gamma$ that approximates the optimal value
function.

\paragraph*{Algorithm}

We use Algorithm~\ref{alg:BuildFSC}, which is similar to \mbox{\citeauthor{GrzPouYanHoe-ToCyb14}}'s
Alpha2FSC, to turn an $\alpha$-vector set $\Gamma$ into an FSC, where each node
$n\eqdef \langle \alpha, a, b, w \rangle$ contains an $\alpha$-vector $\alpha$ with its (sometimes omitted) action $a$, and a representative belief $b$ (weighted average of the encountered
beliefs mapped to this node) with its positive weight $w$ (which tries to capture the importance of
this node).
It first creates a start node $n_0$ from $b_0$ and $\Gamma$ with $w=1$
(line~\ref{alg_part:StartNode}), and adds it to both the new FSC ($N$)
and an open list ($G)$.
Then, each $n$ in $G$ is processed (in fifo order) with each
observation $o_i$ as follows ($0$-probability observations induce a
self-loop (line~\ref{alg_part:selfLoop})).
Lines~\ref{line|bupdate}--\ref{alg_part:ArgmaxAlphaVec} compute the
updated belief $\bb{a_i}{o_i}$, and the associated $w'$ and
$\alpha_i$.
The node $n' \in N$ that contains $\alpha_i$ is extracted if it exists
and $\bb{a_i}{o_i}$ is merged into its representative belief
(lines~\ref{alg_part:nodeExists}--\ref{alg_part:beliefMerge}), otherwise a new node $n'\eqdef \langle \alpha_i, \bb{a_i}{o_i}, w'\rangle$
is created (line~\ref{alg_part:CreateNewNode}) and added to both $N$
and $G$.
An edge $n\to n'$, labeled with $o_i$, is then added
(line~\ref{alg_part:LinkNodes}).

As in \citeauthor{GrzPouYanHoe-ToCyb14}'s work, self-loops are added when an impossible observation $Pr(o_i|b,a_i) = 0$ is received.
This may happen because, when building the FSC, each node $n_i$ is associated to a tuple $\langle \alpha_i, b_i \rangle$, and outgoing edges from $n_i$ will be created only for observations that have non-zero probability in $b_i$.
Yet, during execution $n_i$ may be reached while in a different belief $b'$  from which $a_i$ (the action attached to $\alpha_i$, thus $n_i$) may induce ``unexpected'' observations.\footnote{This will happen when some state $s$ has zero-probability in $b_i$ but not in $b'$ and can induce this observation when performing $a_i$.}
Adding self-loops is a way to equip the agent with a default strategy when such an unexpected observation occurs.

However, there are three differences in our method compared with \citeauthor{GrzPouYanHoe-ToCyb14}'s work:
First, in \cite{GrzPouYanHoe-ToCyb14}, each input  $\alpha$-vector comes with an associated beliefs, while we instead try to compute a belief representative of the reachable beliefs ``attached'' to the vector.
Second, in \cite{GrzPouYanHoe-ToCyb14}, each $\alpha$-vector induces an FSC node ($|N|=|\Gamma|$), while we only consider $\alpha$-vectors reachable by the algorithm from $b_0$.
Third, in our JESP setting, we have another reason for adding self-loops.
Indeed, each agent $i$'s FSC is obtained considering fixed agent $\neq i$'s FSCs;
yet, changing other agents' FSCs will induce a new POMDP from $i$'s point of view, potentially with new possible observations when in certain nodes of $\fsc_i$.

\begin{algorithm}
  \caption{Extract FSC $\langle N, \eta, \psi \rangle$ from set $\Gamma$}
  \label{alg:BuildFSC}
\DontPrintSemicolon

  \SetInd{.3em}{.6em}
  \scalefont{.9}

{[Input:]} $\Gamma$: $\alpha$-vector set 

  Start node $n_{0} \gets node( \langle \argmax_{\alpha \in \Gamma} ( \alpha \cdot b_{0}), b_{0}, 1 \rangle )$ \label{alg_part:StartNode} \;
  $N \gets \{n_0\}$ \; $G.pushback(n_0)$ \; \While{$|G| > 0$}{
	$n \gets G.popfront()$\;
	$\langle b, a_i, w \rangle \gets \langle n.b, \psi(n), n.w \rangle$\;
	\ForAll{$o_{i} \in \Omega_{i}$}{
          \eIf{$Pr(o_i|b, a_i) > 0$  \label{alg_part:CheckPoba}  }{
                        $\bb{a_i}{o_i} \gets beliefUpdate(b,a_i,o_i)$ \label{line|bupdate}\;
                        $w' \gets w \cdot Pr(o_i|b, a_i)$ \label{line|bupdate|weight}\;
			$\alpha_{i} \gets  argmax_{\alpha \in \Gamma} (\alpha \cdot \bb{a_i}{o_i})$   \label{alg_part:ArgmaxAlphaVec}  \;
			\eIf{$\alpha_{i} \notin N$  \label{alg_part:CheckAlphaVec} }{
                          $n' \gets node( \langle \alpha_i, \bb{a_i}{o_i}, w' \rangle )$  \label{alg_part:CreateNewNode}  \;
                          $N \gets N \cup \{n'\}$ \;
                          $G.pushback(n')$\;
                        }{
                          $n' \gets N(\alpha_i)$ \label{alg_part:nodeExists} \;
$n'.b \gets  \frac{n'.w}{n'.w + w'} \cdot n'.b + \frac{w'}{n'.w + w'}  \cdot\bb{a_i}{o_i}$\;
                          $n'.w \gets n'.w + w'$ \label{alg_part:beliefMerge}
                        }
                      }{
                        $n' \gets n$ \label{alg_part:selfLoop} \;
                      }
                      $\eta(n, o_i) \gets n'$  \label{alg_part:LinkNodes} \;
                    }
}

 \Return{$\langle N, \eta, \psi \rangle$}
\end{algorithm}

Note that this algorithm does not bound the resulting number of internal nodes.
Instead, we rely on (i) SARSOP returning finitely many $\alpha$-vectors, and (ii) the FSC extraction producing (significantly) less than $|\Gamma|$
internal nodes.

Once $\fsc_i$ is obtained, the next step is to evaluate all agents' policies in the Dec-POMDP.
To that end, it is sufficient to evaluate $\fsc_i$ in the corresponding best-response POMDP, since the latter combines in a single model the Dec-POMDP and the FSCs for agents $\neq i$.
Here we used the FSC evaluation technique described in Sec.~\ref{sec:FSC_def}.

\subsection{MPOMDP Initializations} 
\label{sec:MPOMDPInitialization}

As previously mentioned, \infJESP's  initialization  is important.
Random initializations may often lead to poor local optima.
Thus, we want to investigate if some non-random heuristic initializations allow to find good solutions quickly and reliably.
Our method relies on solving an MPOMDP \cite{Pynadath-jair02}, \ie, assuming that observations are public.
We extract individual FSCs from this MPOMDP policy as detailed below, and use them to initialize \infJESP.

\paragraph{MPOMDP-based Stochastic Initial FSC (M-S)}
Algorithm \ref{alg:MpomdpSampling} describes how to extract a policy $\fsc_i$ for agent $i$ from an MPOMDP policy.
This approach is similar to Algorithm~\ref{alg:BuildFSC}, the main
difference being that agent $\neq i$'s observations and actions should
also be considered to compute a next belief.
Yet, this is not actually feasible since agent $i$ is not aware of
them.
To address this issue, given agent $i$, let us consider a current node $n = \langle \alpha, b \rangle$ and an individual observation $o_i$ (\ie, with non-zero occurrence
probability). The MPOMDP solution specifies a joint action
$a = \langle a_i, a_{\neq i}\rangle$ at $b$, $a_i$ being the action
attached to $\alpha$.
We arbitrarily assume that (i) each possible stochastic transition of the FSC (from $n$ and coming with
observation $o_i$) corresponds to a possible joint observation
$o_{\neq i}$ of the other agents, (ii) such a transition occurs with probability
$Pr(o_{\neq i} \mid b, a, o_i)$, and (iii) it ``leads'' to a new belief
$\bb{a}{\langle o_i, o_{\neq i}\rangle}$ and thus a node $n'$ attached
to the dominating MPOMDP $\alpha$-vector ($\alpha_i$, \cf
line~\ref{alg_part:FindArgmaxAlphaWithJointBelief}) at this point.
As only one FSC node should correspond to a given $\alpha$-vector, a
new $n'$ attached to $\alpha_i$ is created only if necessary
(lines~\ref{line:testAlphaInN}, \ref{line:createNPrime} and
\ref{line:findNPrime}).
As multiple joint observations $o_{\neq i}$ may lead to the same $n'$,
the corresponding transition probabilities are cumulated in
$\eta(n, o_i, n')$ (line~\ref{line:addObsProb}).
$0$-probability joint observations $o_{\neq i}$ given $b$, $a$ and
$o_i$ are ignored.
$0$-probability individual observations $o_i$ given $b$ and $a$ induce
the creation of a self-loop (line~\ref{line:selfLoop}).
Note that the resulting FSCs are stochastic, and some of them may
never be improved on,
so that, in this case, the solution returned by \infJESP may contain stochastic
FSCs.

\paragraph{MPOMDP-based Deterministic Initial FSC (M-D)}

A very simple variant of this method is to assume that the only
possible transition from $n$ under $o_i$ corresponds to the most
probable joint observation $o_{\neq i}$ of the other agents. Node transitions are then deterministic ($\eta(n, o_i, n') = 1$).

\medskip

Notes: (1) As both (i) these heuristic initializations and (ii) \infJESP's
local search are deterministic, using a deterministic procedure to
derive best-response FSCs induces a deterministic algorithm for which
restarts are useless.
(2) These heuristic initializations can be also adapted to JESP's
finite-horizon setting by using policy trees.

\begin{algorithm}
  \caption{Extract $FSC_i$ from MPOMDP set $\Gamma$} 
  \label{alg:MpomdpSampling}
  \DontPrintSemicolon

  \SetInd{.3em}{.6em}
  \scalefont{.9}

{[Input:]} $i$: agent $\mid$ $\Gamma$: MPOMDP $\alpha$-vector set

  Start node $n_{0} \gets node( \langle argmax_{\alpha \in \Gamma} \alpha \cdot b_{0}, b_0, 1 \rangle)$ \;
  $ N \gets \{n_{0}\}$ \;
  $ G.pushback(n_{0})$ \;
\While{$|G| > 0$}{
	$n \gets G.popfront()$\;
	$\langle b, a, w \rangle \gets \langle n.b, \psi(n), n.w \rangle$\;
        \ForAll{ $o_{i} \in \Omega_{i}$ \label{alg_part:FindAllObsForAgentI} }{
          \eIf{$Pr(o_i | b, a) > 0$}{
            \ForAll{ $o_{\neq i} \in \Omega_{\neq i}$}{
              \If{$Pr(o_{\neq i}|o_i, b, a) > 0$ \label{alg_part:CheckPobsOthers} }{
                $\bb{a_i}{\langle o_i, o_{\neq i} \rangle} \gets beliefUpdate(b,a_i,\langle o_i, o_{\neq i} \rangle)$ \;
                $w' \gets w \cdot Pr(\langle o_i, o_{\neq i} \rangle|b, a_i)$ \;
                $\alpha_{i} \gets  argmax_{\alpha}(\bb{a}{\langle o_i, o_{\neq i}\rangle}, \Gamma)$ \label{alg_part:FindArgmaxAlphaWithJointBelief}  \;
                \eIf{$\alpha_{i} \notin N$ \label{line:testAlphaInN}}{
                  $n' \gets node(\langle \alpha_{i}, \bb{a}{}, w' \rangle)$  \label{line:createNPrime}\;
                  $N \gets N \cup \{n'\}$\;
                  $G.pushback(n')$}{
                  $n' \gets N(\alpha_i)$  \label{line:findNPrime} \;
$n'.b \gets  \frac{n'.w}{n'.w + w'} \cdot n'.b + \frac{w'}{n'.w + w'}  \cdot\bb{a_i}{o_i}$\;
                  $n'.w \gets n'.w + w'$
                }
                $\eta(n, o_i, n') \gets \eta(n, o_i, n') + Pr(o_{\neq i}|o_i, b, a) $ \label{line:addObsProb}}
            }
          }{
              $\eta(n, o_i, n) \gets 1$ \label{line:selfLoop}
          }
}
      }
 \Return{$\langle N, \eta, \psi \rangle$}
\end{algorithm}

\section{Experiments}
\label{sec:Experiments}

In this section, we evaluate variants of  \infJESP and other state-of-the-art solvers on various Dec-POMDP benchmarks.

\subsection{Comparison method}
\label{sec:cmp_method}

The standard Dec-POMDP benchmark problems we use are: DecTiger, Recycling, Meeting in a 3$\times$3 grid, Box Pushing, and
Mars Rover (\cf \url{http://masplan.org/problem}).

We compare the different variants of \infJESP with state-of-the-art Dec-POMDP algorithms, namely: FB-HSVI \cite{DibAmaBufCha-jair16}, Peri \cite{PajPel-nips11}, PeriEM \cite{PajPel-nips11}, PBVI-BB \cite{MacIsb-nips13} and MealyNLP \cite{AmaBonZil-aaai10}.
We ignore JESP as it only handles finite horizons.
Dec-BPI is compared separately because we can only estimate values from empirical graphs on some benchmark problems \cite{BerAmaHanZil-jair09}.

\subsection{Algorithm settings}
\label{sec:algo_settings}

SARSOP \cite{KurHsuLee-rss08} is our POMDP solver, with a $0.001$
Bellman residual (also for FSC evaluation) and a $5$\,s timeout.

We have tested 4 variants of \infJESP.
IJ(M-D) and IJ(M-S) are \infJESP
initialized with the M-D and M-S heuristics (Sec.~\ref{sec:MPOMDPInitialization}) and without restarts (because they behave deterministically).
IJ(R-$1$) and IJ(R-$100$) are \infJESP with random
initializations (at most 5 nodes per FSC) and, respectively, 1
(re)start and 100 restarts.
Each restart has a timeout of $7200$\,s.
Due to the limited time, we did not perform random restart tests on the Box-Pushing and Mars Rover domains.
IJ(R-$x_y$) denotes $y$ runs of IJ(R-$x$) used to compute statistics.

The experiments with \infJESP were conducted on a laptop with an i5-1.6\,GHz CPU and 8\,GB RAM.
The source code is available at \url{https://gitlab.inria.fr/ANR_FCW/InfJESP} .

\subsection{Results}
\label{sec:results}

\subsubsection{Comparison with state-of-the-art algorithms}

\begin{table}
  \caption{Comparison of different algorithms in terms of final FSC size, number of iterations, time and value, on five infinite-horizon benchmark problems with $\gamma = 0.9$ for all domains. \newline
R-$R_N$ = $N$ runs with $R$ random restarts each.
M-S and M-D = deterministic initializations with stochastic extracted FSCs and deterministic extracted FSCs from a MPOMDP solution. 
}
  \label{Table:BenchmarksResults}
  \resizebox{\linewidth}{!}{

\def\InfJesp{IJ}
\def\Rand{R}

\newcommand{\stdv}[1]{{\scriptstyle \pm #1}}

\begin{tabular}{ScSSS}
  \toprule
  {Alg.} & {FSC size} & {\#Iterations} & {Time (s)} & {Value}\\
  \midrule
  \multicolumn{5}{c}{ {DecTiger} {($|\cI|=2, |\cS|=2, |\cA^{i}|=3, |\cZ^{i}|=2 $)}} \\ 
  \midrule
  {FB-HSVI} & & & 153.7 & 13.448 \\
  {PBVI-BB} & & &  & 13.448 \\
  {Peri} & & & 220 & 13.45  \\
  \textbf{\InfJesp(\Rand-$100_5$)} & {$9 \stdv{3} \times 8 \stdv{4}$} & 33{$\!\!\stdv{1}$} & 12417.6 & 13.44 \\
  \textbf{\InfJesp(M-D)} & {$6 \times 6$} & 36 & 201 & 13.44 \\
  \textbf{\InfJesp(M-S)} & {$6 \times 6$} & 27 & 213  & 13.44 \\
  {PeriEM} & & & 6450 & 9.42 \\
  {MealyNLP} & & & 29 & -1.49 \\ 
  \textbf{\InfJesp(\Rand-$1_{500}$)} & {$29 \stdv{55} \times 32 \stdv{54}$} & 21{$\!\!\stdv{14}$} & 124.18  & -35.47 \\
  \midrule
  \multicolumn{5}{c}{{Recycling} {($|\cI|=2, |\cS|=4, |\cA^{i}|=3, |\cZ^{i}|=2 $)}} \\
  \midrule
  {FB-HSVI} & & & 2.6 & 31.929 \\
  {PBVI-BB} & & &  & 31.929 \\
  {MealyNLP} & &  & 0 & 31.928 \\
  {Peri} & & & 77 & 31.84  \\
  {PeriEM} & & & 272 & 31.80 \\
  \textbf{\InfJesp(\Rand-$100_{10}$)} & {$2 \stdv{0} \times 2 \stdv{0}$} &  3{$\!\!\stdv{0}$} & 3.1 & 31.62 \\
  \textbf{\InfJesp(\Rand-$1_{1000}$)} & {$3 \stdv{2} \times 3 \stdv{2}$} & 3{$\!\!\stdv{0}$} & 0.03 & 30.60 \\
  \textbf{\InfJesp(M-S)} & {$ 8 \times 8 $} & 3 & 0 & 26.57 \\ 
  \textbf{\InfJesp(M-D)} & {$ 4 \times 4 $} & 3 & 0 & 25.65 \\
  \midrule
  \multicolumn{5}{c}{{Grid3*3} {($|\cI|=2, |\cS|=81, |\cA^{i}|=5, |\cZ^{i}|=9 $)}} \\
  \midrule
  \textbf{\InfJesp(M-D)} & {$ 8 \times 10 $} & 3 & 2 & 5.81 \\
  \textbf{\InfJesp(M-S)} & {$ 9 \times 17 $} & 3 & 9 & 5.81 \\
  \textbf{\InfJesp(\Rand-$100_5$)} &  {$\ 13 \stdv{6} \times 9 \stdv{0}$} &  4{$\!\!\stdv{0}$} & 413.8 & 5.81 \\
  {FB-HSVI} & & & 67 & 5.802 \\
  \textbf{\InfJesp(\Rand-$1_{500}$)} & {$11 \stdv{18} \times 11 \stdv{18}$} & 3{$\!\!\stdv{1}$}  & 4.14 & 5.79 \\
  {Peri} & &  & 9714 & 4.64 \\
  \midrule
  \multicolumn{5}{c}{{Box-pushing} {($|\cI|=2, |\cS|=100, |\cA^{i}|=4, |\cZ^{i}|=5 $)}} \\
  \midrule
  {FB-HSVI} & &  & 1715.1 & 224.43 \\
  {PBVI-BB} & &  &  & 224.12 \\
  \textbf{\InfJesp(M-S)} & {$ 250 \times 408 $} & 6 &  963 & 220.25 \\
  \textbf{\InfJesp(M-D)} & {$ \ 274 \times 342 $} & 8 &  1436 & 203.41 \\
  {Peri} & & & 5675 & 148.65 \\
  {MealyNLP} &  &  & 774 & 143.14 \\ 
  {PeriEM} & & & 7164 & 106.65 \\
  \midrule
  \multicolumn{5}{c}{{Mars Rover} {($|\cI|=2, |\cS|=256, |\cA^{i}|=6, |\cZ^{i}|=8 $)}} \\
  \midrule
  {FB-HSVI} & & & 74.31 & 26.94 \\
  \textbf{\InfJesp(M-D)} & {$ 125 \times 183 $} & 6 & 122 & 26.91 \\
  \textbf{\InfJesp(M-S)} & {$ 84 \times 64 $} & 5 & 66 & 24.17 \\
  {Peri} & & & 6088 & 24.13 \\
  {MealyNLP} & &  & 396 & 19.67 \\ 
  {PeriEM} & & & 7132 & 18.13 \\ 
  \bottomrule
\end{tabular}

   }
\end{table}

Table~\ref{Table:BenchmarksResults} presents the results for the 5 problems. For IJ(R-$x_y$), in each run we kept the highest value among the restarts, and then computed the average of this value over the various runs.
The columns provide:
(\textit{Alg.}) the different algorithms at hand,
(\textit{N}) the final FSCs' sizes (for \infJESP[s]),
(\textit{I}) the number of iterations required to converge (for \infJESP[s]),
(\textit{Time}) the running time,
(\textit{Value}) the final value (lower bounds for \infJESP[s], 
the true value being at most 0.01 higher).

In terms of final value achieved, \infJESP variants find good to very
good solutions in most cases, often very close to FB-HSVI's
near-optimal solutions.
Comparing with estimated values obtained by Dec-BPI on the DecTiger, Grid and Box-Pushing problems (using the figures in  \cite[p.~123]{BerAmaHanZil-jair09}), \infJESP outperforms Dec-BPI in these three problems.
Note also that Dec-BPI's results highly depend on the size of the considered FSCs and that Dec-BPI faces difficulties with large  FSCs, \eg,  in the DecTiger problem \cite{BerAmaHanZil-jair09}.

Regarding the solving time, \infJESP variants have different behaviors depending on the problem at hand.
For example, in DecTiger, IJ(M-S) and IJ(M-D) have almost the same solving time, and both of them require more time than  IJ(R-$1_{500}$).
However, in other problems, IJ(M-D) and IJ(M-S) may require very different solving times.
Overall, IJ(M-D) and IJ(M-S) can give good solutions within an acceptable amount of time.

\subsubsection{Study of \infJESP}

\def\scaleg{1} \def\scalem{1}
\begin{figure}\centering
       \includegraphics[width=\scaleg\linewidth]{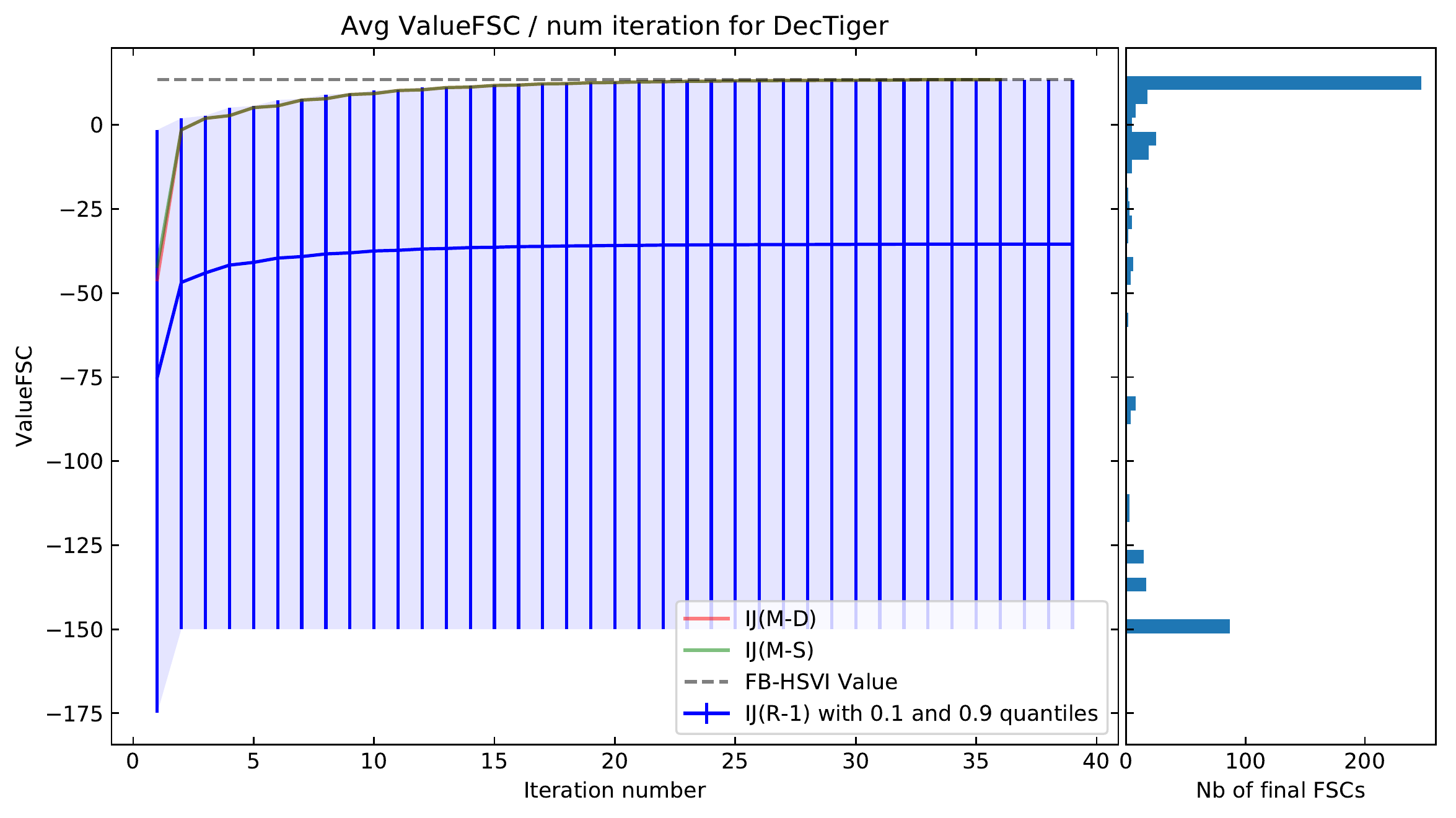}

       \includegraphics[width=\scaleg\linewidth]{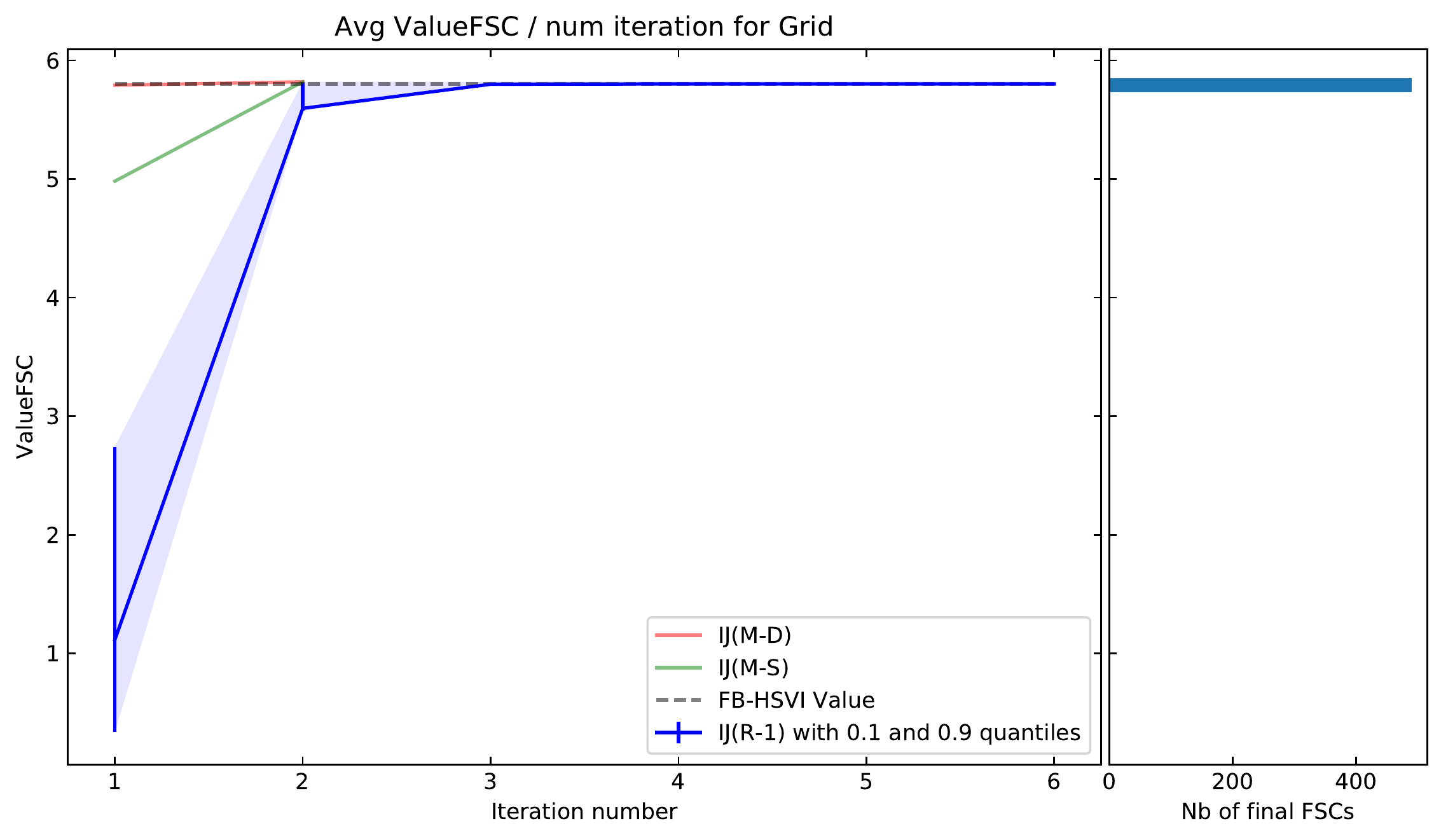}

       \includegraphics[width=\scaleg\linewidth]{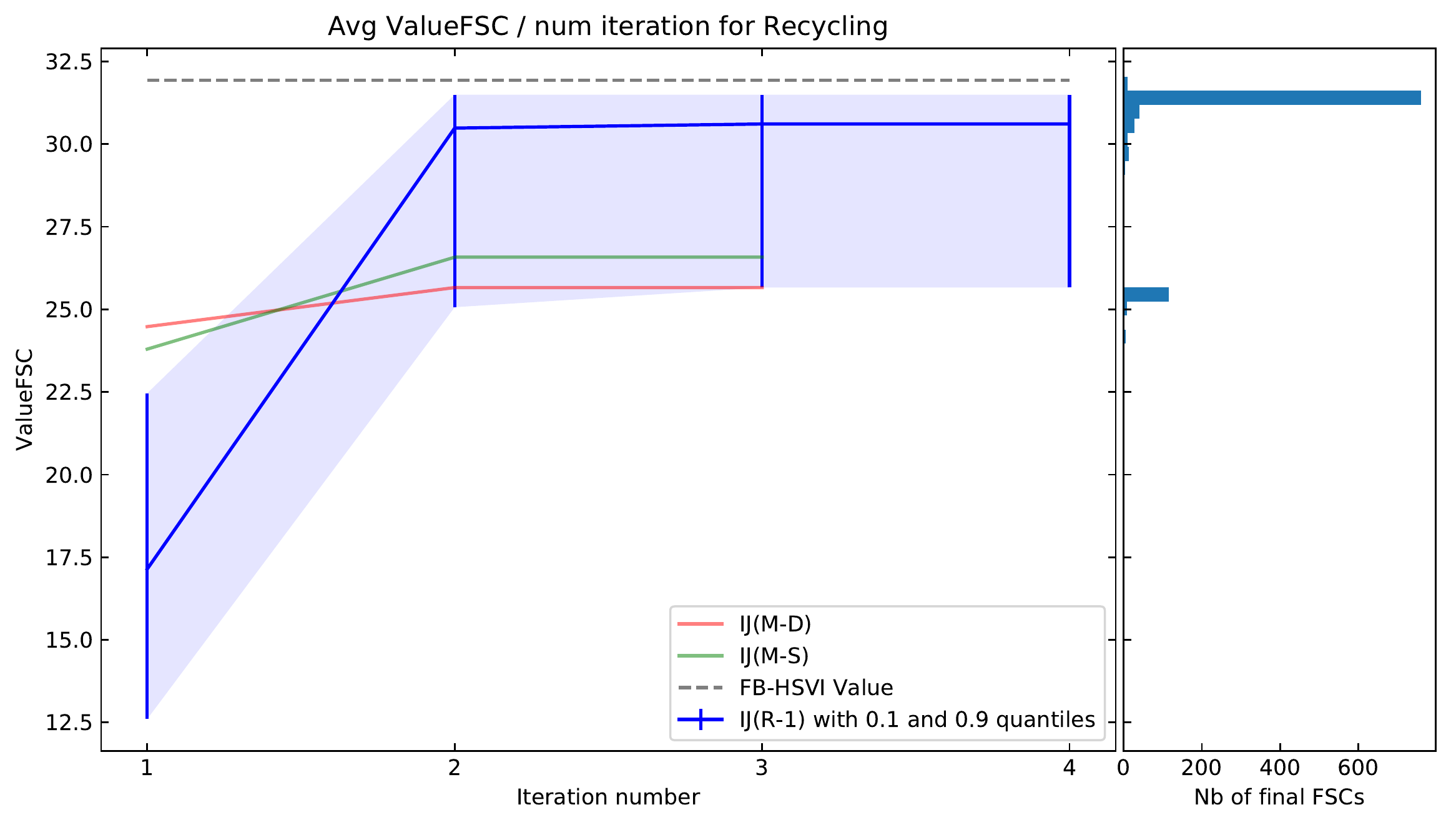}

       \caption{Values of the joint policy for the DecTiger, Grid and
         Recycling problems (from top to bottom).  The left part of each figure presents the evolution (during a
         run) of the value of the joint policy at each iteration of: IJ(R-$1$) (avg + 10th and 90th percentiles in blue), and the deterministic IJ(M-D) (in red) and IJ(M-S) (in green).
The dashed line represents FB-HSVI's final value.
The right part presents the value distribution after convergence of IJ(R-$1$). 
     }
     \label{Figure:ValueIterationAndFinal}
\end{figure}

Fig.~\ref{Figure:ValueIterationAndFinal} (right) presents the distribution over final values of IJ(R-$1$).
In the three problems at hand, most runs end with (near-) globally optimal values, despite initial FSCs limited to 5 nodes.
Other local optima turn out to be rarely obtained, except in DecTiger.
These distributions show that few restarts are needed to reach a global optimum with high probability.

Fig.~\ref{Figure:ValueIterationAndFinal} (left) presents the evolution of the FSC values during a JESP run as a function of the iteration number for
IJ(R-$1$) (avg + 10th and 90th percentiles in blue), IJ(M-D) (in green) and IJ(M-S) (in red).
For random initialization, at each iteration the average is computed
over all runs, even if they have already converged.
This figure first shows that (i) \infJESP's solution quality
monotonically increases during a run, and (ii) most runs converge to
local optima in few iterations.
It also illustrates that MPOMDP initializations are rather good heuristics compared to random ones but, as expected, do not always lead to a global optimum (\cf the Recycling problem).

\subsubsection{Complementary results}
The experiments also showed (\cf detailed results in
\ifextended{App.~\extref{app:CompleteResults}}{\cite[App.~\extref{app:CompleteResults}]{ictai21ext}})
that IJ(R-$1$) exhibits different behaviors depending on the problem
at hand.
For instance, removing unreachable extended states
(Sec.~\ref{sec:BRPOMDP}) divides their number on average by 1 in DecTiger (which is expected since, in this case, any observation
can be received from any state), 50 in Grid3*3, and 5 in Recycling.
We also observed that final FSCs with the highest values are not obtained with the more iterations and do not contain the more nodes.
Small FSCs seem sufficient to obtain good solutions, opening an interesting research direction to combine \infJESP with FSC compression.

\section{Conclusion}
\label{sec:Conclusion}

We proposed a new infinite-horizon Dec-POMDP solver, \infJESP, which
is based on JESP, but using FSC representations for
policies instead of trees.
FSCs allow not only to handle infinite horizons, but also to 
possibly derive compact policies.
Any restart of the ideal algorithm provably converges to a Nash
optimum, and, despite the existence of local optima, experiments show
frequent convergence close to global optima in five standard benchmark
problems.
One ingredient, the derivation of a POMDP from $|\cI|-1$ fixed
FSCs, could also be useful in other settings where other agents'
behaviors are defined independently, as games \cite{Oliehoek-MCLBN05}
or Human-Robot interactions \cite{BesAarBaRuDragan-2017}.

We also provided a method to extract individual policies (FSCs) from
an MPOMDP solution $\alpha$-vector set $\Gamma$ to initialize
\infJESP.
This approach can be easily adapted to JESP for the finite-horizon setting. 
Empirical results showed that this initialization method could, in some cases, reach good solutions with a value higher than the average answer of \infJESP with random initialization.
However, this approach does not always work.
In the Recycling problem, it is worse than the average \infJESP value with random initializations.
How to derive (possibly randomly) better heuristic initializations from MPOMDP policies remains an open question.

Future work includes experimenting with (i) bounded FSC sizes (to enforce compactness of policies); (ii) different best-response POMDP formalizations; (iii) different timeouts for SARSOP; (iv) parallelized restarts.

\ifdefined\extended
\appendices

\section{Notes about the Candidate POMDP Formalizations}
\label{app:POMDPformalization}

We here look in more details at how, given the Dec-POMDP and FSCs for all
agents but $i$, one can derive a valid best-response POMDP from agent
$i$'s point of view.
First, note that, in the POMDP formalization:
\begin{itemize}
\item we have {\em no choice} for the {observation} ($o^t_i$) and for
  the {action} ($a^t_i$), which are pre-requisites;
\item the {\em only choice} that can be made is that of the variables
  put in the {(extended) state} $e^t_i$;
\item the {transition, observation and reward functions} are
  {\em consequences} of this choice.
\end{itemize}

Also, note that, in a POMDP formalization, the state and observation
variables need be distinguished.
One cannot write that $X$ is an observed state variable (contrary to
the MOMDP formalism).
In the following, it is thus important to distinguish the different
types of random variables.
In particular,
\begin{itemize}
\item $O^t_i$ always denotes (of course) the {\em observation
    variable}, yet
\item $\tilde O^t_i$ denotes either an {\em intermediate
    variable}\footnote{Such a variable does not usually appear in the
    POMDP formalism, but is required here to compute the transition
    and observation functions based on the Dec-POMDP model and the
    FSCs.} or a {\em state variable} (which, in both cases, is
  strongly dependent on observation variable $O^i_t$ since their
  values are always equal).
\end{itemize}

The main issue when designing our BR POMDP is to verify that the
dependencies in the transition, observation, and reward functions are
the appropriate ones (see Fig.~\ref{fig:BRpomdp} (top left)).
In the following paragraphs, we consider 3 choices for the {\em
  extended state} of the POMDP, check whether they indeed induce valid
POMDPs, and derive the induced transition, observation and reward
functions when appropriate.

\paragraph{\uline{$e^t=\langle s^t, n^t_{\neq i}\rangle$ ?} }
To show that $e^t = \langle s^t, n^t_{\neq i} \rangle$ does not induce
a proper POMDP, let us write the transition function:
\begin{align*}
  & T_e(e^t, a^t_i, e^{t+1}) 
  \eqdef Pr(e^{t+1}| e^t, a^t_i) \\
  & = Pr(s^{t+1},n_{\neq i}^{t+1}| s^{t}, n_{\neq i}^{t},a^t_{i}) \\
  & = \sum_{\tilde o^{t+1}} Pr(s^{t+1},n_{\neq i}^{t+1}, \tilde o^{t+1}| s^{t}, n_{\neq i}^{t}, a^t_{i}) \\
  & = \sum_{\tilde o^{t+1}}
  Pr(n_{\neq i}^{t+1}| s^{t}, n_{\neq i}^{t}, a^t_{i}, s^{t+1}, \tilde o^{t+1}) \\
  & \quad \cdot Pr(s^{t+1}, \tilde o^{t+1}| s^{t}, n_{\neq i}^{t}, a^t_{i}) \\
  & = \sum_{\tilde o^{t+1}}
  Pr(n_{\neq i}^{t+1}| s^{t}, n_{\neq i}^{t}, a^t_{i}, s^{t+1}, \tilde o^{t+1}) \\
  & \quad \cdot Pr(\tilde o^{t+1}| s^{t}, n_{\neq i}^{t}, a^t_{i}, s^{t+1}) 
  \cdot Pr(s^{t+1}| s^{t}, n_{\neq i}^{t}, a^t_{i}) \\
  & = \sum_{\tilde o^{t+1}}
  \left( \prod_{j\neq i} \eta(n_j^t,\tilde o_j^{t+1}, n_j^{t+1}) \right) \\
  &  \cdot O(\langle \psi_{\neq i}(n_{\neq i}^{t}), a^t_{i} \rangle, s^{t+1}, \tilde o^{t+1})  
  \cdot T(s^{t}, \langle \psi_{\neq i}(n_{\neq i}^{t}), a^t_{i} \rangle, s^{t+1}),
\end{align*}
where $\tilde O^{t+1}$ is a temporary variable, not a state or an
observation variable.
Here, as illustrated by Fig.~\ref{fig:BRpomdp} (top right), the
issue is that the actual observation variable $O_i^{t+1}$ is not independent on
$E^t$ given $E^{t+1}$ and $A^t$.

\paragraph{\uline{$e^t = \langle s^t, n^t_{\neq i}, \tilde o^t_i \rangle$ ?} }
We correct the first attempt by adding a {\em state} variable
$\tilde O^t_i$, thus defining
$e^t = \langle s^t, n^t_{\neq i}, \tilde o^t_i\rangle$, hence:
\begin{align*}
  & T_e(e^t, a^t_i, e^{t+1}) 
  \eqdef Pr(e^{t+1}| e^t, a^t_i) \\
  & = Pr(s^{t+1}, n_{\neq i}^{t+1}, \tilde o_{i}^{t+1}| s^{t}, n_{\neq i}^{t}, \tilde o_{i}^{t}, a^t_{i}) \\
  & = \sum_{o^{t+1}_{\neq i}} Pr(s^{t+1},n_{\neq i}^{t+1}, o^{t+1}_{i-1}, \tilde o_i^t| s^{t}, n_{\neq i}^{t}, \tilde o_i^{t}, a^t_{i}) \\
  & = \sum_{o^{t+1}_{\neq i}}
  Pr(n_{\neq i}^{t+1}| s^{t}, n_{\neq i}^{t}, \tilde o_i^{t}, a^t_{i}, s^{t+1}, o^{t+1}) \\
  & \quad \cdot Pr(s^{t+1}, o^{t+1}_{\neq i}, \tilde o_i^{t+1}| s^{t}, n_{\neq i}^{t}, \tilde o_i^{t}, a^t_{i}) \\
  & = \sum_{o^{t+1}_{\neq i}}
  Pr(n_{\neq i}^{t+1}| s^{t}, n_{\neq i}^{t}, \tilde o_i^{t}, a^t_{i}, s^{t+1}, o^{t+1}) \\
  &  \cdot Pr(o^{t+1}_{\neq i}, \tilde o_i^{t+1}| s^{t}, n_{\neq i}^{t}, \tilde o_i^{t}, a^t_{i}, s^{t+1}) \cdot Pr(s^{t+1}| s^{t}, n_{\neq i}^{t}, \tilde o_i^{t}, a^t_{i}) \\
  & = \sum_{o^{t+1}_{\neq i}}
  \Big( \prod_{j\neq i} \eta(n_j^t, o_j^{t+1}, n_j^{t+1}) \Big) \\
  &  \cdot O(\langle \psi_{\neq i}(n_{\neq i}^{t}), a^t_{i} \rangle, s^{t+1}, o^{t+1}) \cdot T(s^{t}, \langle \psi_{\neq i}(n_{\neq i}^{t}), a^t_{i} \rangle, s^{t+1}).
  \intertext{The formula above does not raise the same issue as $\tilde O^t_i$ is a {\em state} variable, and the observation variable $O^t_i$ now does not depend on the previous state given the new one and the action (\cf Fig.~\ref{fig:BRpomdp} (bottom right)). 
Also, we have the following observation function:}
  & O_e(a^t_i, e^{t+1}_i, o^{t+1}_i)
  \eqdef Pr( o^{t+1}_i | a^t_i, e^{t+1}_i) \\
  & = Pr( o^{t+1}_i | a^t_i, \langle s^{t+1}, n^{t+1}_{\neq i}, \tilde o^{t+1}_i \rangle) = \mathbf{1}_{o^{t+1}_i = \tilde  o^{t+1}_i},
  \intertext{and the trivial reward function:}
  & r_e(e^t, a^t_i) =  r(s^t, a^t_i,\psi_{\neq i}(n_{\neq i}^{t}) ).
\end{align*}

\paragraph{\uline{$e^t = \langle s^t, n^{t-1}_{\neq i}, \tilde o^t_{\neq i} \rangle$ ?} } 
In this work, we have also considered a third choice for the extended state,
defined as
$e^t = \langle s^t, n^{t-1}_{\neq i}, \tilde o^t_{\neq i} \rangle$,
where $\tilde O^t_{\neq i}$ is a state variable (not an observation
variable) corresponding to other agents' observations at time $t$:
\begin{align*}
  & T_e(e^t, a^t_i, e^{t+1}) 
  \eqdef Pr(e^{t+1}| e^t, a^t_i) \\
  & = Pr(s^{t+1}, n_{\neq i}^{t}, \tilde o_{\neq i}^{t+1}| s^{t}, n_{\neq i}^{t-1}, \tilde o_{\neq i}^{t}, a^t_{i}) \\
  & = Pr(\tilde o_{\neq i}^{t+1}| s^{t+1}, n_{\neq i}^{t},  s^{t}, n_{\neq i}^{t-1}, \tilde o_{\neq i}^{t}, a^t_{i}) \\
  & \quad \cdot Pr(s^{t+1}, n_{\neq i}^{t}|  s^{t}, n_{\neq i}^{t-1}, \tilde o_{\neq i}^{t}, a^t_{i}) \\
  & = Pr(\tilde o_{\neq i}^{t+1}| s^{t+1}, n_{\neq i}^{t},  s^{t}, n_{\neq i}^{t-1}, \tilde o_{\neq i}^{t}, a^t_{i}) \\
  & \quad \cdot Pr(s^{t+1} | n_{\neq i}^{t},  s^{t}, n_{\neq i}^{t-1}, \tilde o_{\neq i}^{t}, a^t_{i}) \\
  & \quad \cdot Pr(n_{\neq i}^{t} | s^{t}, n_{\neq i}^{t-1}, \tilde o_{\neq i}^{t}, a^t_{i} ) \\
  & = Pr(s^{t+1}| s^{t}, n_{\neq i}^{t}, a^t_{i}) 
  \cdot Pr( n_{\neq i}^{t}| n_{\neq i}^{t-1}, \tilde o_{\neq i}^{t}) \\
  & \quad \cdot Pr(\tilde o_{\neq i}^{t+1}|s^{t+1}, n_{\neq i}^{t}, a^t_{i}) \\
  & =  Pr(s^{t+1}| s^{t}, n_{\neq i}^{t}, a^t_{i}) 
  \cdot Pr( n_{\neq i}^{t}| n_{\neq i}^{t-1},\tilde o_{\neq i}^{t}) \\
  & \quad \cdot \sum_{\tilde o^{t+1}_i}Pr(\tilde o_{\neq i}^{t+1},\tilde o^{t+1}_i|s^{t+1}, n_{\neq i}^{t}, a^t_{i}) \\
  & = T(s^{t}, \langle \psi_{\neq i}(n_{\neq i}^{t}), a^t_{i} \rangle, s^{t+1}) 
  \cdot \prod_{j\neq i} \eta(n_j^{t-1}, o_j^{t}, n_j^{t}) \\
  & \quad \cdot \sum_{\tilde o^{t+1}_{i}}O(\langle \psi_{\neq i}(n_{\neq i}^{t}), a^t_{i} \rangle, s^{t+1}, \langle \tilde o_{\neq i}^{t+1}, \tilde o^{t+1}_i \rangle).
  \intertext{The formula above does not raise issues. 
As illustrated by Fig.~\ref{fig:BRpomdp} (bottom right), the actual observation variable $O_i^{t+1}$ is independent on
    $E^t$ given $E^{t+1}$ and $A^t$.
Also, we have the following observation function:}
  & O_e(a^t_i, e^{t+1}_i, o^{t+1}_i)
  \eqdef Pr( o^{t+1}_i | a^t_i, e^{t+1}_i) \\
  & = Pr( o^{t+1}_i | a^t_i, \langle s^{t+1}, n^{t}_{\neq i}, \tilde o^{t+1}_{\neq i} \rangle) \\
  & = \frac{Pr( \tilde o^{t+1}_{\neq i}, o^{t+1}_i,  s^{t+1}, n^{t}_{\neq i}, a^t_i)}{Pr(\tilde o^{t+1}_{\neq i}, s^{t+1}, n^{t}_{\neq i}, a^t_i)}\\
  & = \frac{Pr( \tilde o^{t+1}_{\neq i}, o^{t+1}_i | s^{t+1}, n^{t}_{\neq i}, a^t_i)}{\sum_{o^{t+1}_i}Pr(\tilde o^{t+1}_{\neq i}, o^{t+1}_i | s^{t+1} n^{t}_{\neq i}, a^t_i)}\\
  & = \frac{O(\langle \psi_{\neq i}(n_{\neq i}^{t}), a^t_{i} \rangle, s^{t+1},  \langle \tilde o_{\neq i}^{t+1},o^{t+1}_i \rangle)}{\sum_{ \tilde o^{t+1}_{i}}O(\langle \psi_{\neq i}(n_{\neq i}^{t}), a^t_{i} \rangle, s^{t+1}, \langle \tilde o_{\neq i}^{t+1},  \tilde o^{t+1}_i \rangle)}.
  \intertext{ We can see that the denominator is identical to the last part of the transition function. In practice, when computing transition probabilities, we will store the values for $\sum_{\tilde o^{t+1}_{i}}O(\langle \psi_{\neq i}(n_{\neq i}^{t}), a^t_{i} \rangle, s^{t+1}, \langle \tilde o_{\neq i}^{t+1}, \tilde o^{t+1}_i \rangle)$ so as to reuse them when computing the observation function. In the end, the reward function is obtained with:}
  & r_e(e^t, a^t_i) 
  = r(\langle s^{t}, n^{t-1}_{\neq i}, \tilde o^{t}_{\neq i} \rangle, a^t_i) \\
  & = \sum_{ n^{t}_{\neq i}} Pr(n^{t}_{\neq i}| n^{t-1}_{\neq i}, \tilde o^{t}_{\neq i}) \cdot r( s^{t}, n^{t-1}_{\neq i}, a^t_i) \\
  & = \sum_{ n^{t}_{\neq i}}  \Big( \prod_{j\neq i} \eta(n_j^{t-1}, \tilde o_j^{t}, n_j^{t}) \Big) \cdot r( s^{t}, \langle \psi_{\neq i}(n_{\neq i}^{t}), a^t_i \rangle).
\end{align*}

Different formalizations lead to different state spaces with different
sizes, so that the time and space complexities of the generation of
this best-response POMDP or of its solving will depend on the choice of formalization.
Which choice is best shall depend on the Dec-POMDP at hand, and
possibly on the current FSCs.
We have opted for the simplest of the two formalizations presented
above (which we call the ``MOMDP formalization'' because one state
variable is fully observed), but observed little differences in
practice in our experiments.

\begin{figure*}
  \centering

  \vspace{-1em}

  \begin{minipage}{.48\linewidth}
    \centering
    \includegraphics[width=.76\linewidth]{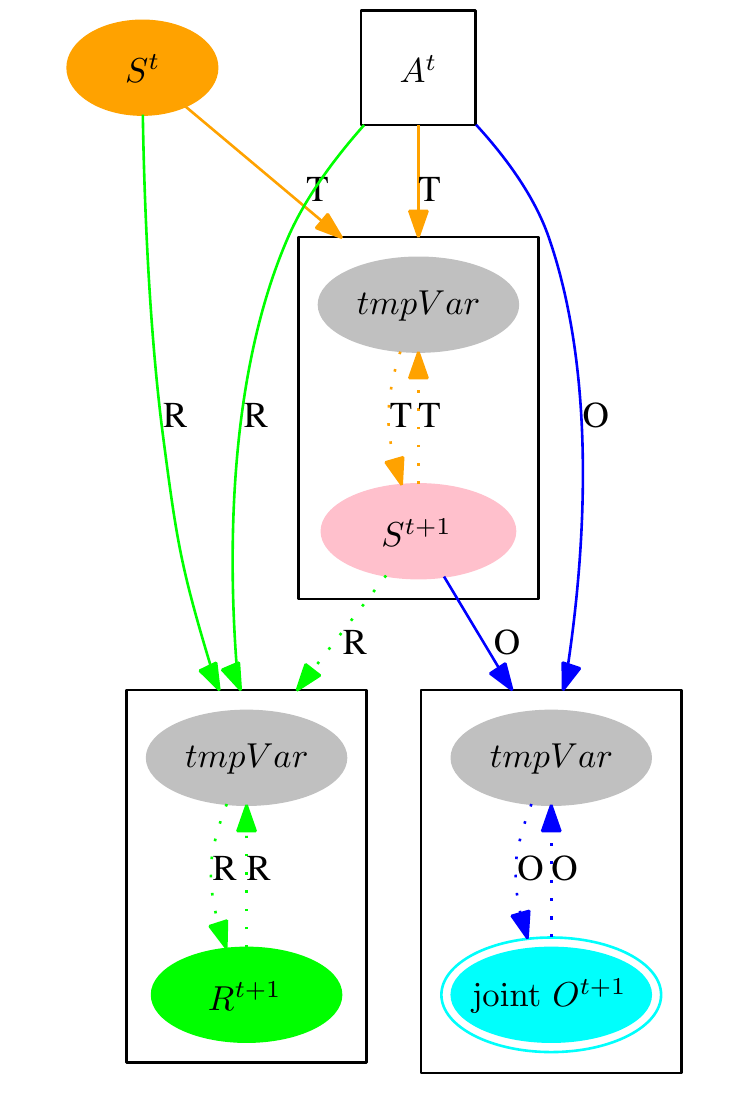}
  \end{minipage}
  \hfill \begin{minipage}{.48\linewidth}
    \centering
    \includegraphics[width=.86\linewidth]{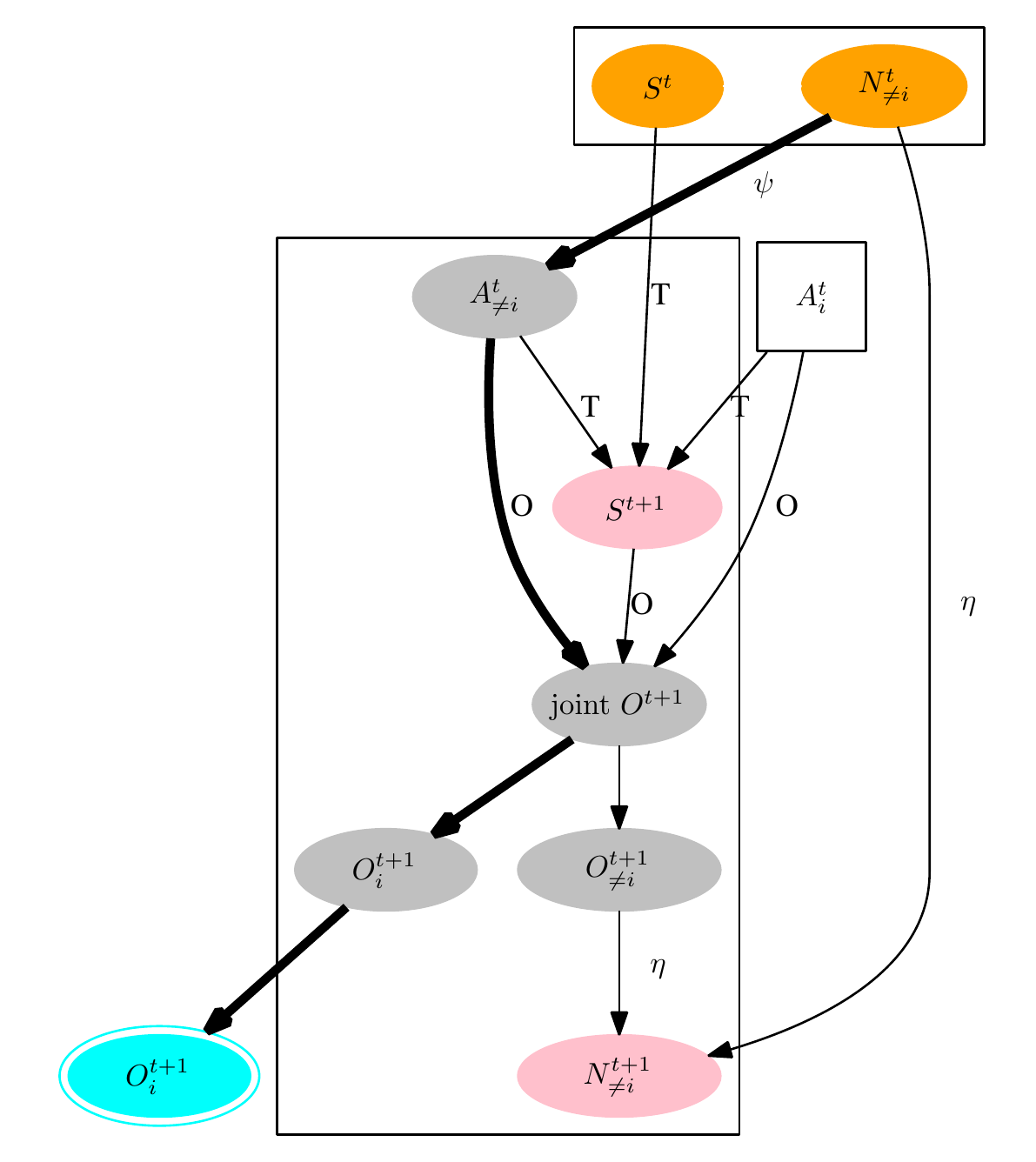}
  \end{minipage}

  \begin{minipage}{.48\linewidth}
    \centering
    \includegraphics[width=1.06\linewidth]{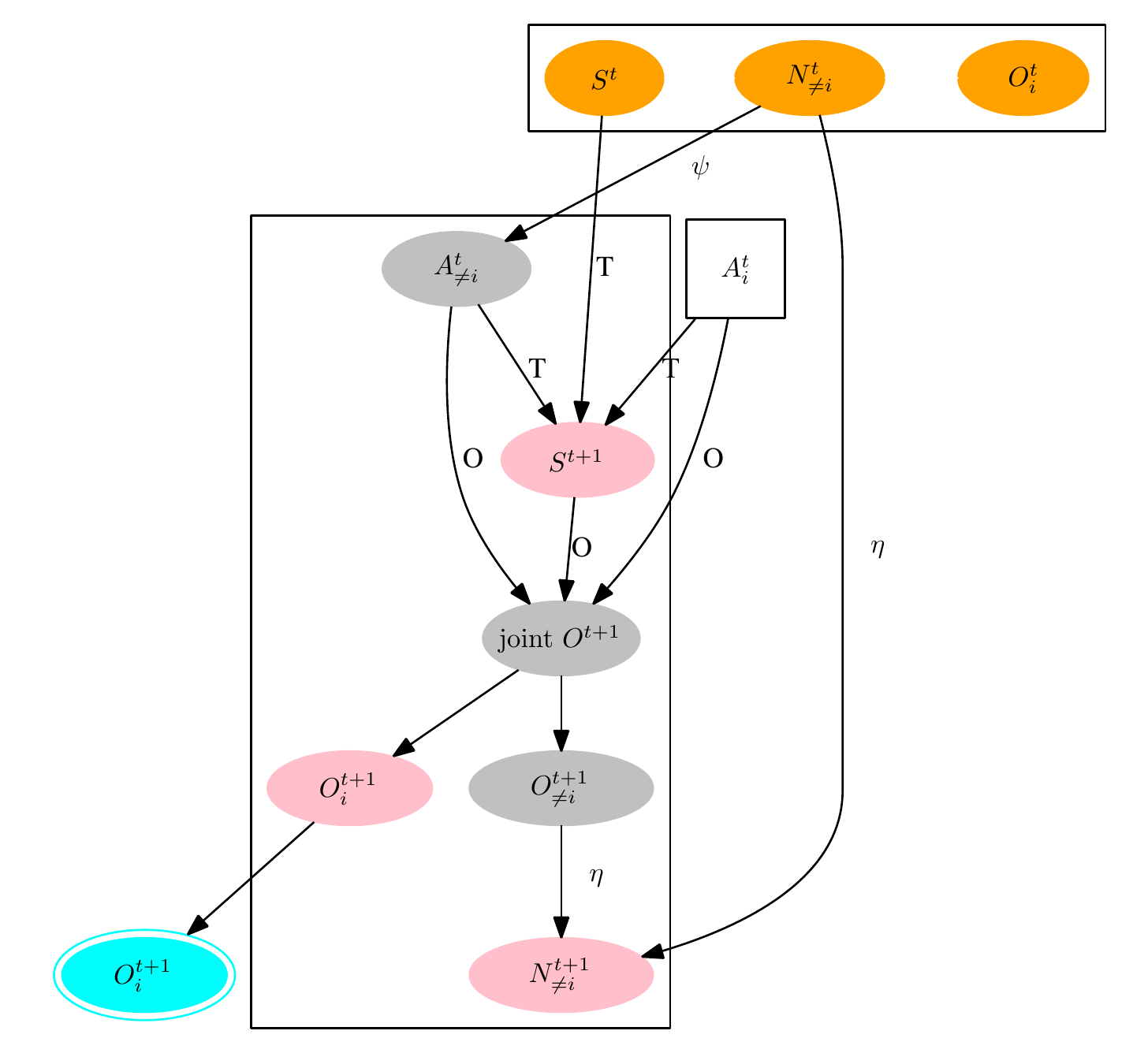}
  \end{minipage}
  \hfill
  \begin{minipage}{.48\linewidth}
    \centering
    \includegraphics[width=0.86\linewidth]{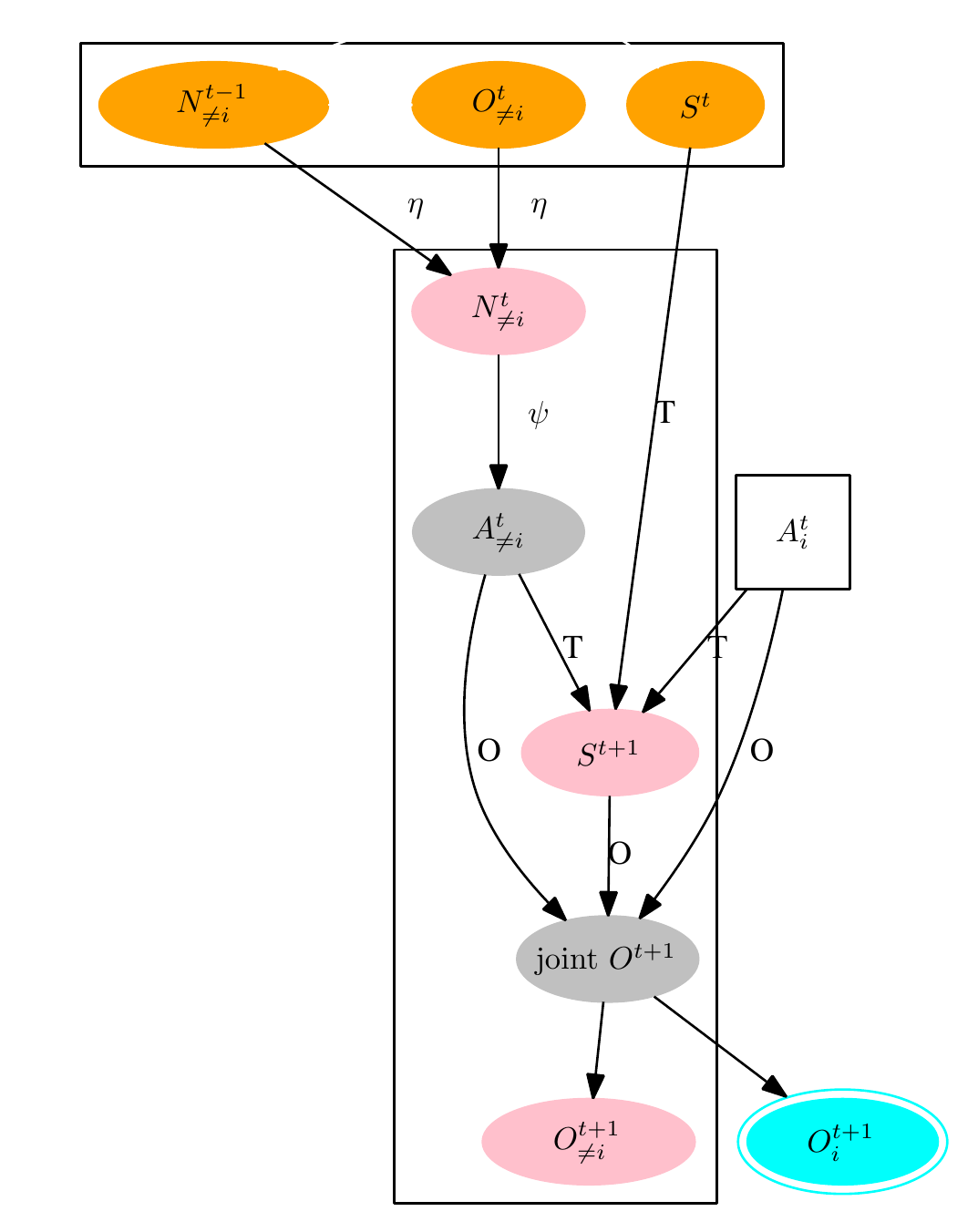}
  \end{minipage}

  \caption{
(top left) Standard POMDP dependencies, including intermediate variables used to compute the transition function (only); and candidate Best-Response POMDP formalizations with: (top right) $e^t_i \eqdef \langle s^t, n^t_{\neq i} \rangle$, (bottom left) $e^t_i \eqdef \langle s^t, n^t_{\neq i}, o^t_i \rangle$, and (bottom right) $e^t_i \eqdef \langle s^t, n^{t-1}_{\neq i}, o^t_{\neq i} \rangle$.
  }

  \label{fig:BRpomdp}
\end{figure*}

\section{Supplementary Figures}
\label{app:CompleteResults}

\def\scaleg{0.99}
\def\scalem{0.32}
\begin{figure*}\begin{minipage}{\scalem\linewidth}
       \centering
       \includegraphics[width=\scaleg\textwidth]{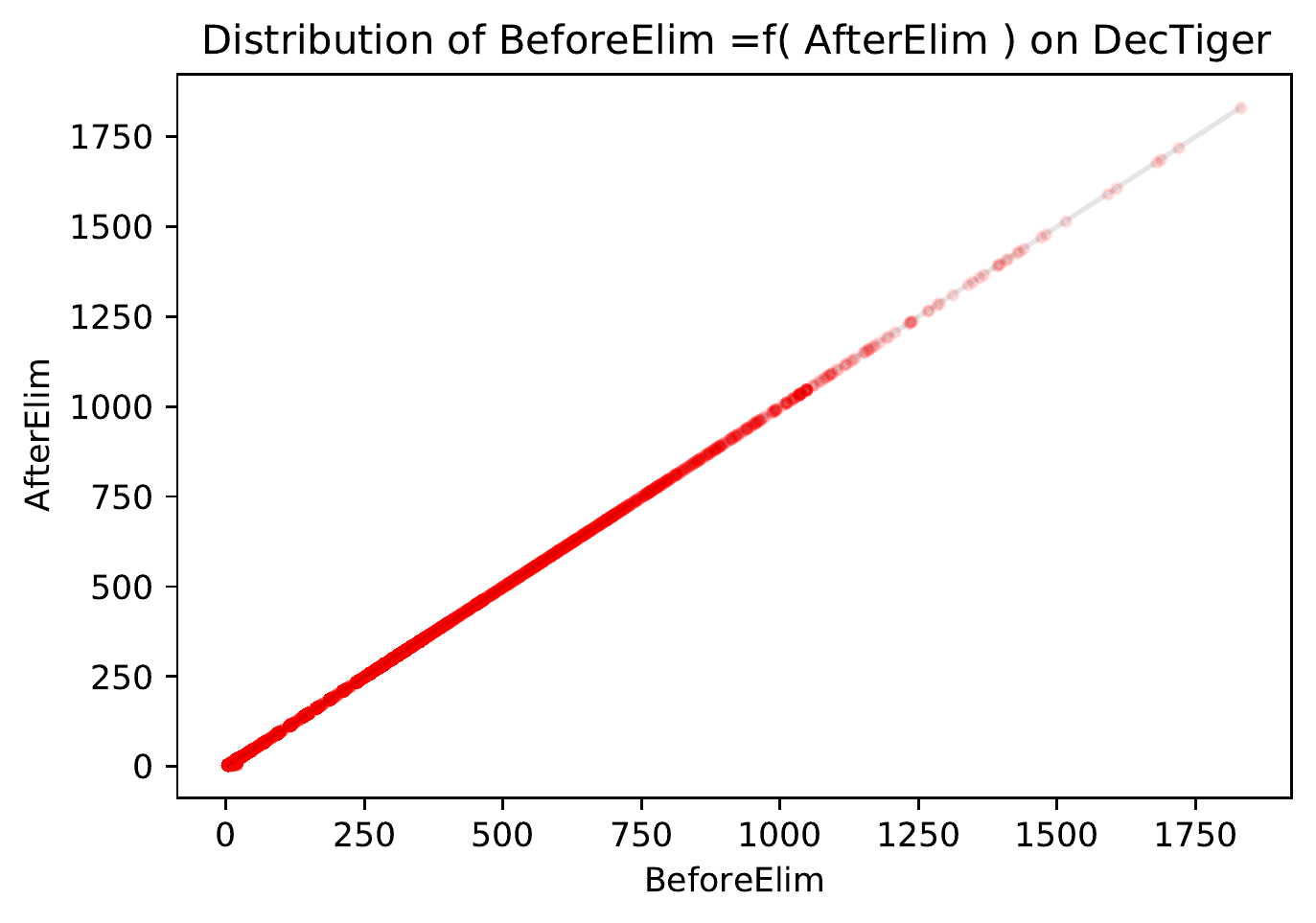}
     \end{minipage}
     \hfill
     \begin{minipage}{\scalem\linewidth}
       \centering
       \includegraphics[width=\scaleg\textwidth]{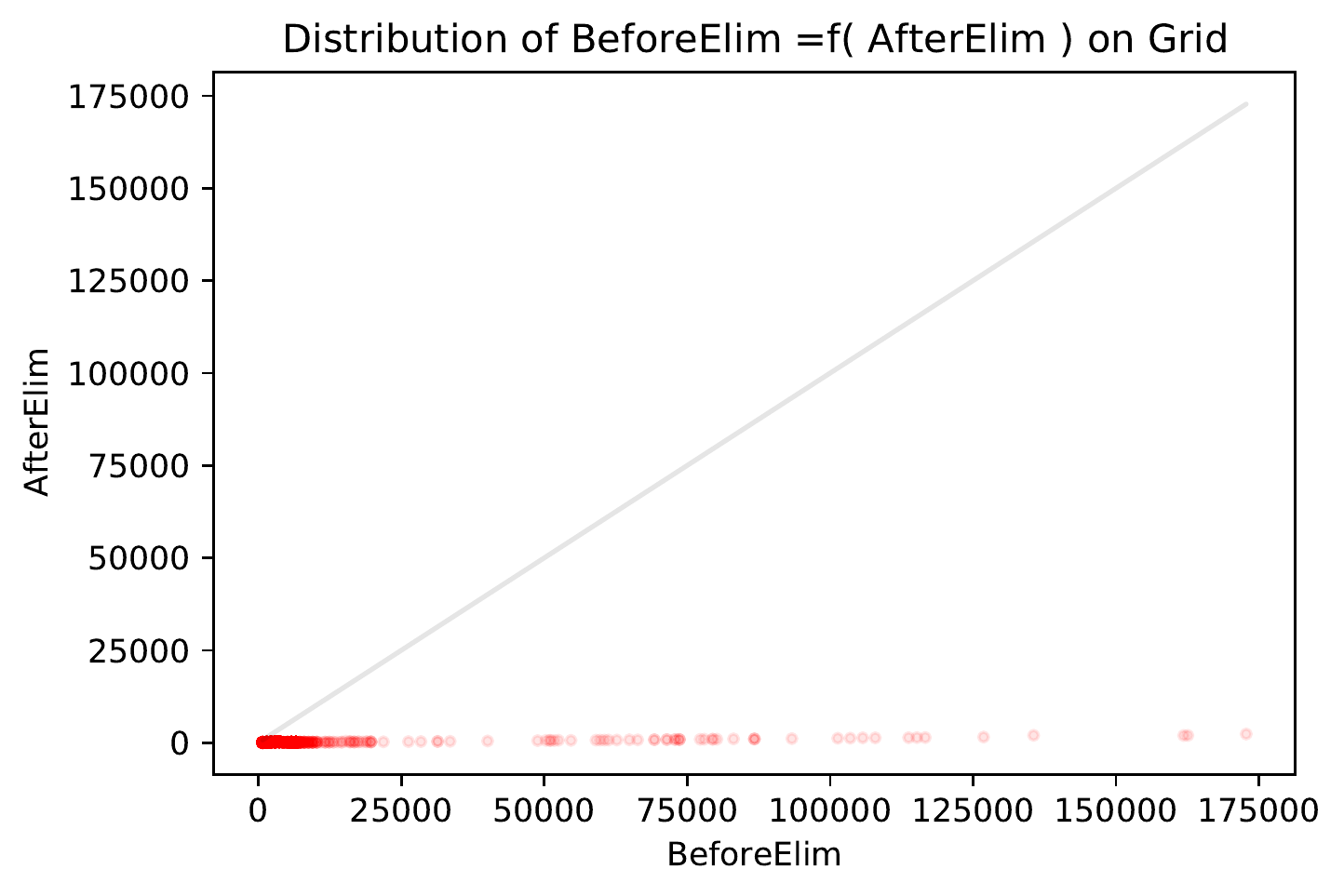}
     \end{minipage}
     \hfill
     \begin{minipage}{\scalem\linewidth}
       \centering
       \includegraphics[width=\scaleg\textwidth]{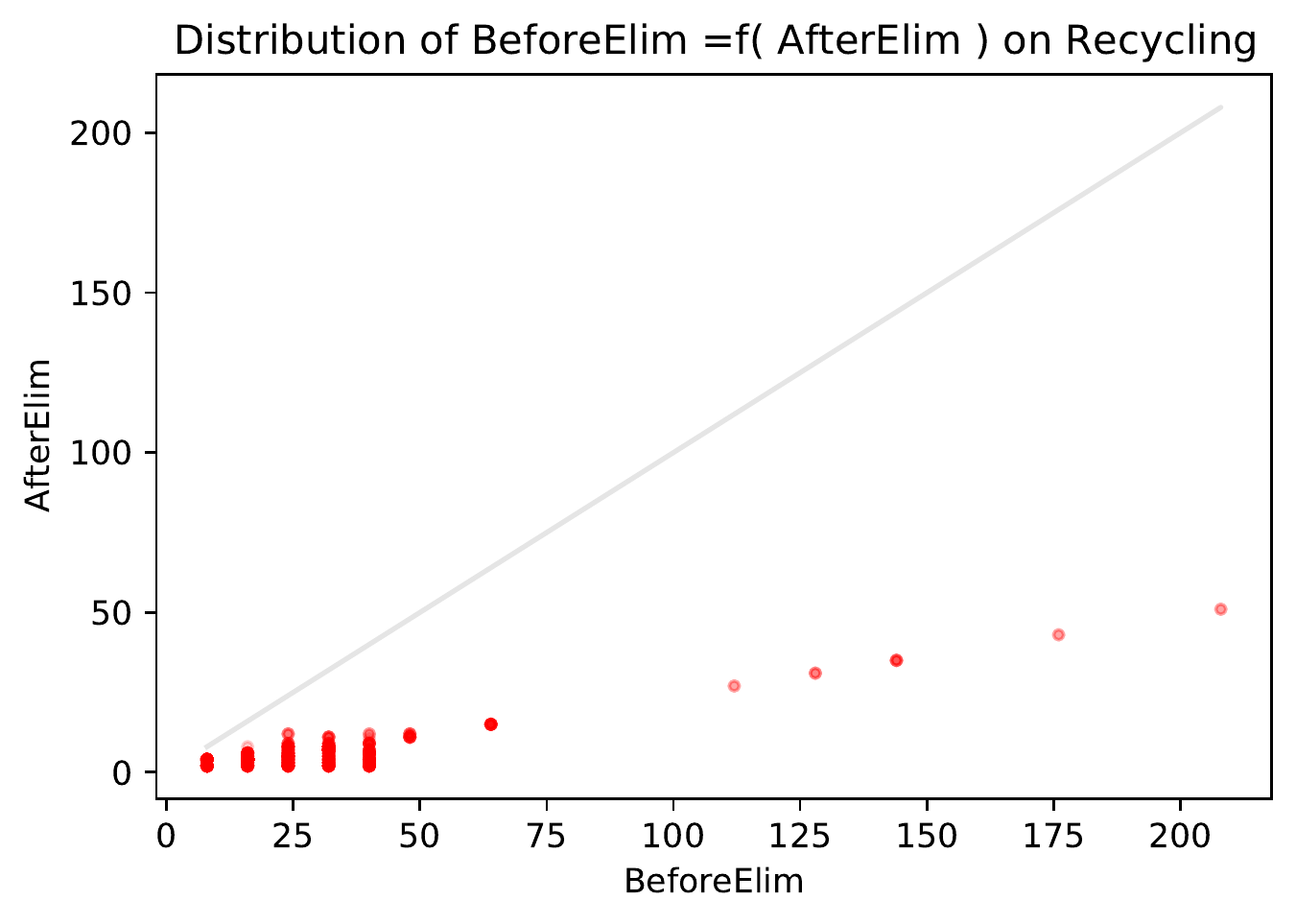}
     \end{minipage}
     \caption{Effect of the state elimination for the DecTiger, Grid and Recycling problems (from left to right).
Each figure plots, for each Best-Response POMDP obtained while executing \infJESP[(R-$1_1$)], the number of states of that POMDP after state elimination as a function of that number before state elimination.
}
     \label{Figure:StateElimination}
\end{figure*}

Regarding {\em state elimination} (as explained in Section~\ref{sec:BRPOMDP}), it turned out that, for the ``MOMDP'' best-response POMDP, the ratio of the initial number of (extended) states over its number of states after state elimination depends on the problem at hand but does not depend on the \infJESP run (see Figure~\ref{Figure:StateElimination}).
This ratio was 1 for the DecTiger problem, 50 for the Grid problem, and 5 for the Recycling problem.
These differences are due to the stochasticity of the observation process, which limits or even prevents state elimination. 
As it is currently done, state elimination is based on the probability of generating a specific observation from a state considering possible actions. 
However, for the DecTiger problem, every observation can be generated whatever the considered action, leading to no state elimination (contrary to the Grid and the Recycling problem).

\def\scaleg{0.99}
\def\scalem{0.32}
\begin{figure*}\begin{minipage}{\scalem\linewidth}
       \centering
       \includegraphics[width=\scaleg\textwidth]{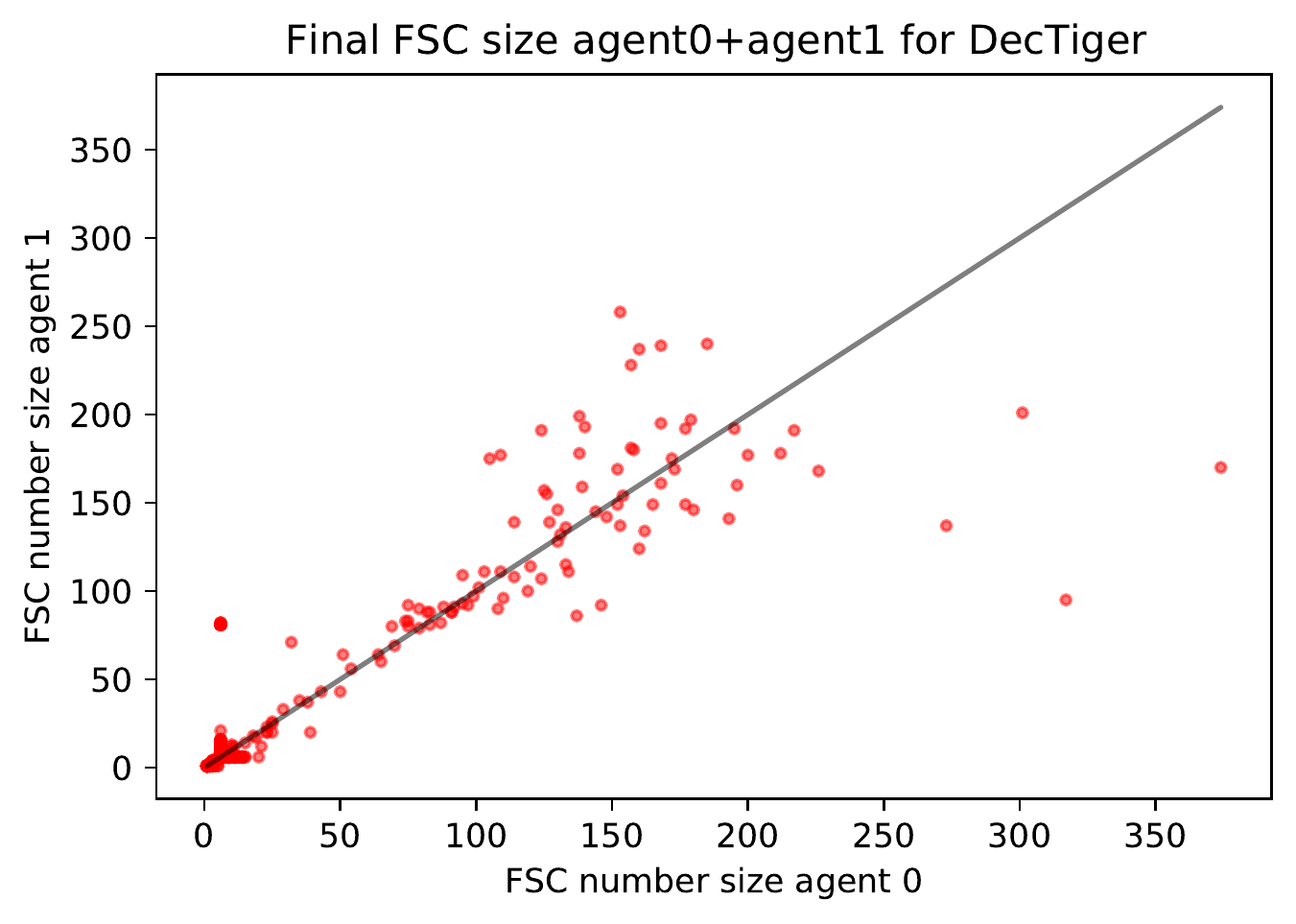}
     \end{minipage}
     \hfill
     \begin{minipage}{\scalem\linewidth}
       \centering
       \includegraphics[width=\scaleg\textwidth]{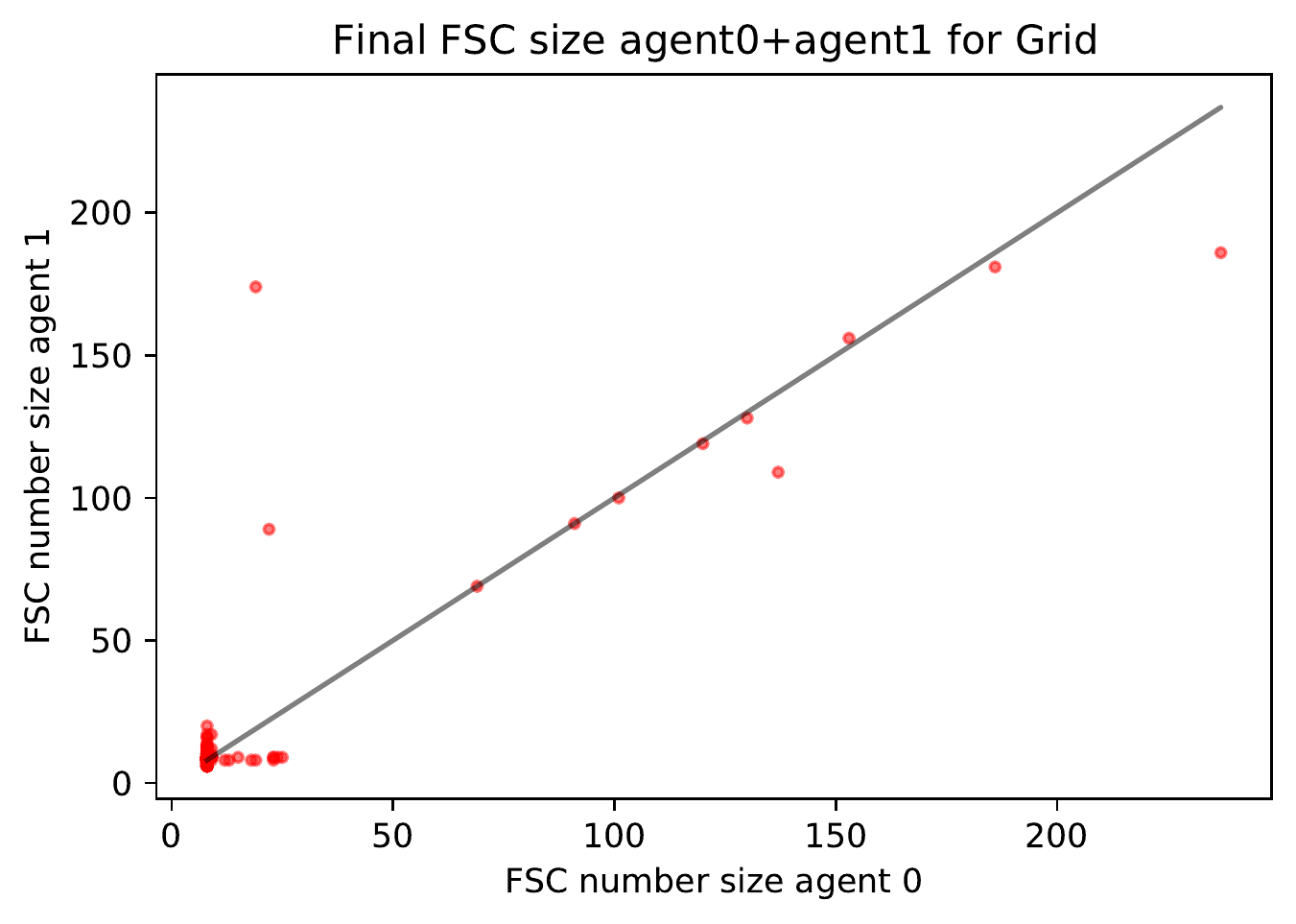}
     \end{minipage}
     \hfill
     \begin{minipage}{\scalem\linewidth}
       \centering
       \includegraphics[width=\scaleg\textwidth]{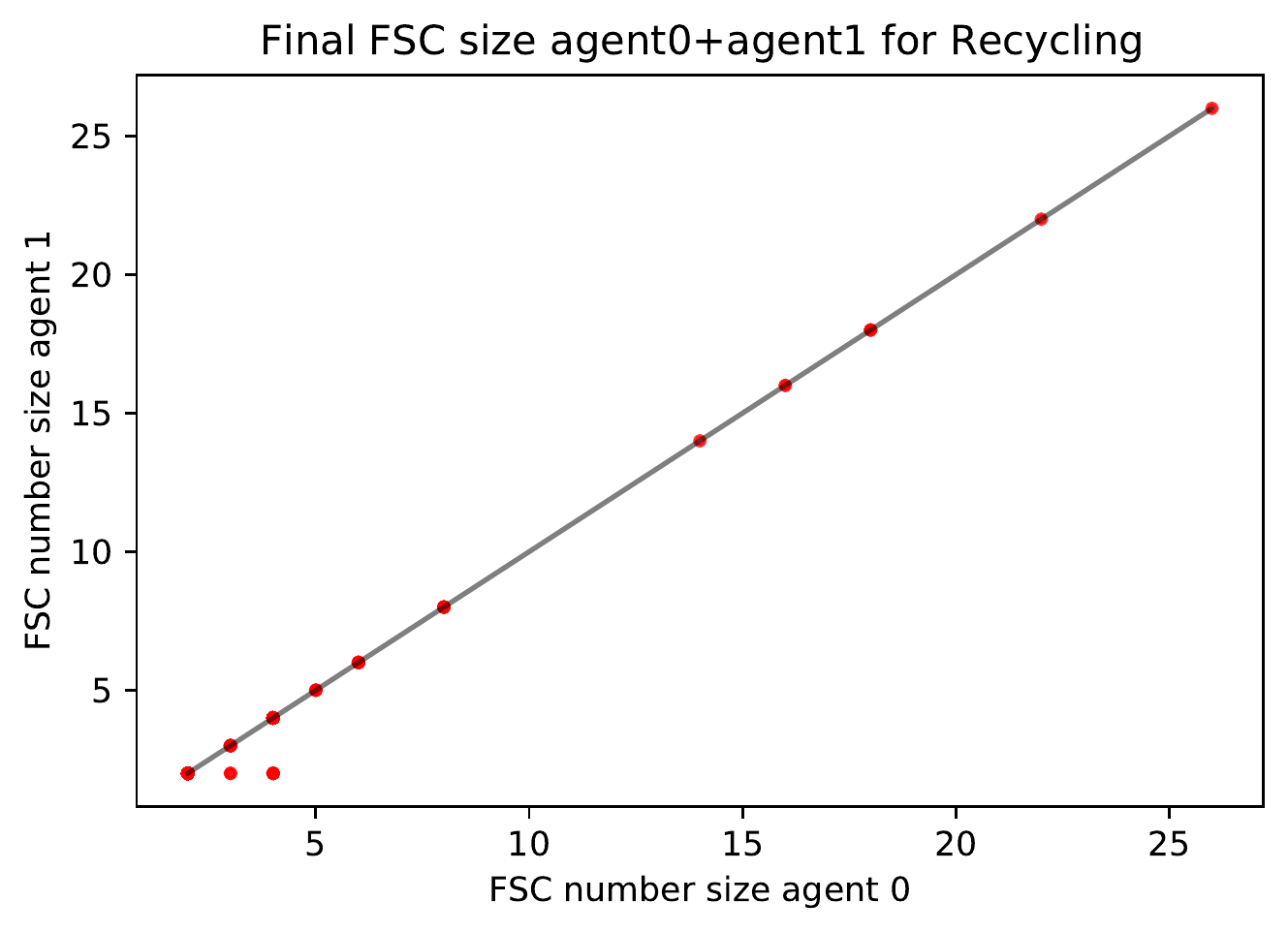}
     \end{minipage}
     \caption{Sizes of the final FSCs obtained with \infJESP[(R-$1_1$)] after convergence for the DecTiger, Grid and Recycling problems (from left to right).
Each dot corresponds to a pair containing the sizes of both agents' FSCs.
}
     \label{Figure:Final_FSC_Size}
\end{figure*}

Regarding {\em the size of the final FSCs} obtained after convergence of an \infJESP run with a random initialization (see Figure~\ref{Figure:Final_FSC_Size}), we also observed different behaviors.
Applied to the DecTiger problem, \infJESP leads to a very large distribution of the sizes of the final FSCs.
Applied to the Recycling problem, \infJESP  also leads to a large distribution of the sizes of the final FSC, but the sizes of the FSCs of both agents are symmetrical.
Applied to the Grid problem, \infJESP generates a large number of FSC pairs of size 10 with sometimes huge variation and with a tendency to be asymmetrical, the first optimized agent having a tendency to have more FSC nodes.
These results need to be further investigated in order to understand them clearly and see if it is possible to link them to the nature of the addressed domain.

\def\scaleg{0.99}
\def\scalem{0.32}
\begin{figure*}\begin{minipage}{\scalem\linewidth}
       \centering
       \includegraphics[width=\scaleg\textwidth]{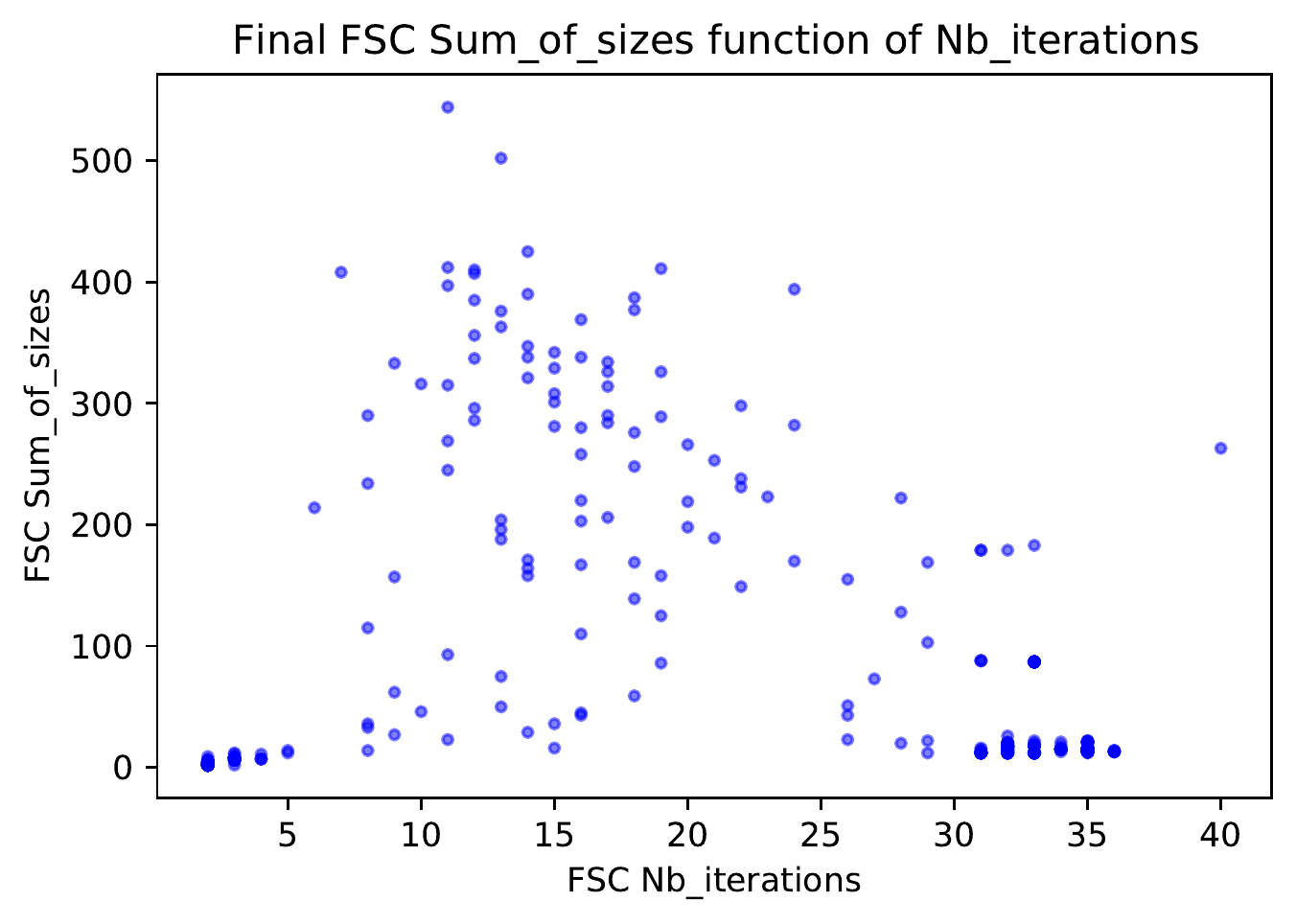}
     \end{minipage}
     \hfill
     \begin{minipage}{\scalem\linewidth}
       \centering
       \includegraphics[width=\scaleg\textwidth]{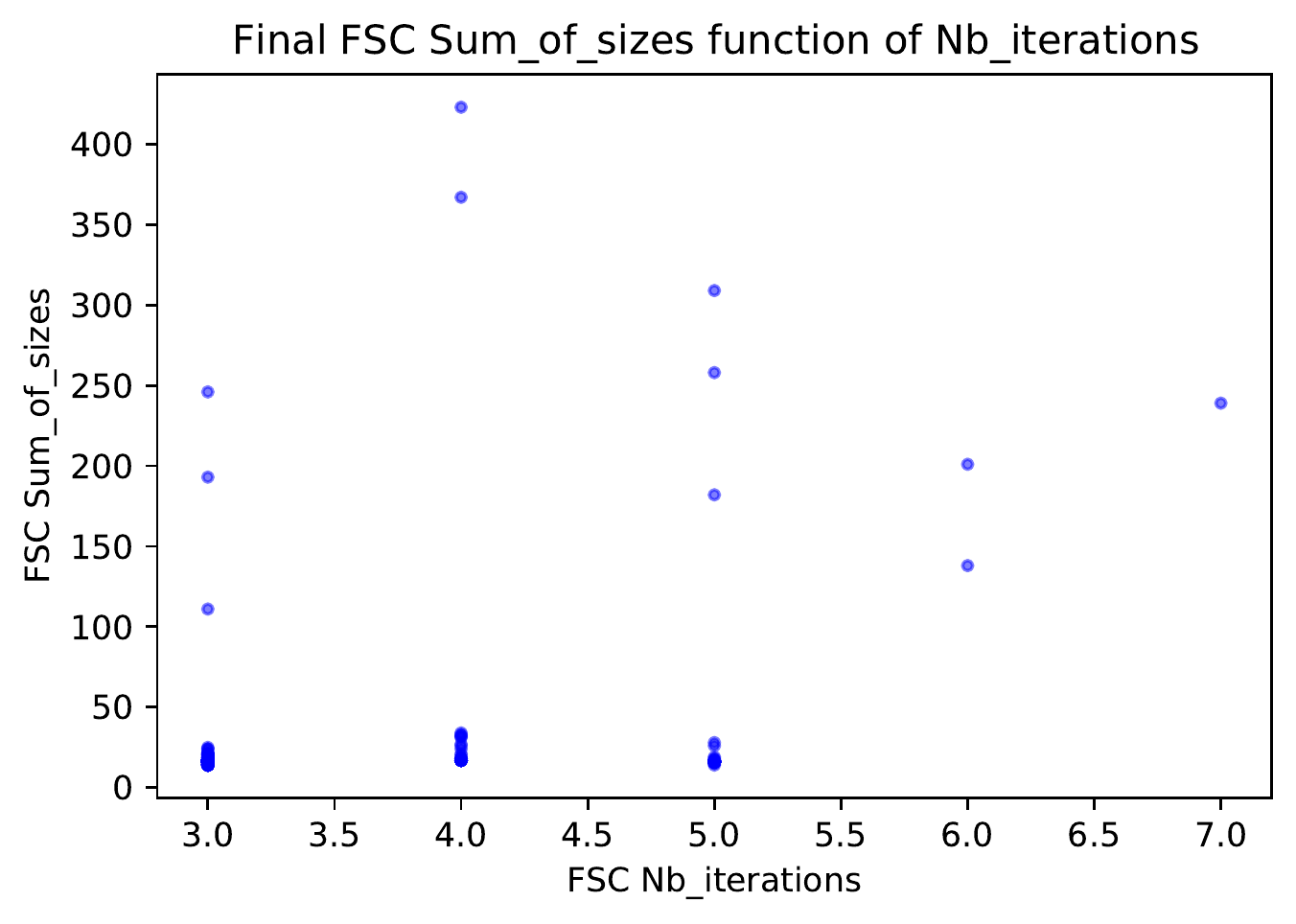}
     \end{minipage}
     \hfill
     \begin{minipage}{\scalem\linewidth}
       \centering
       \includegraphics[width=\scaleg\textwidth]{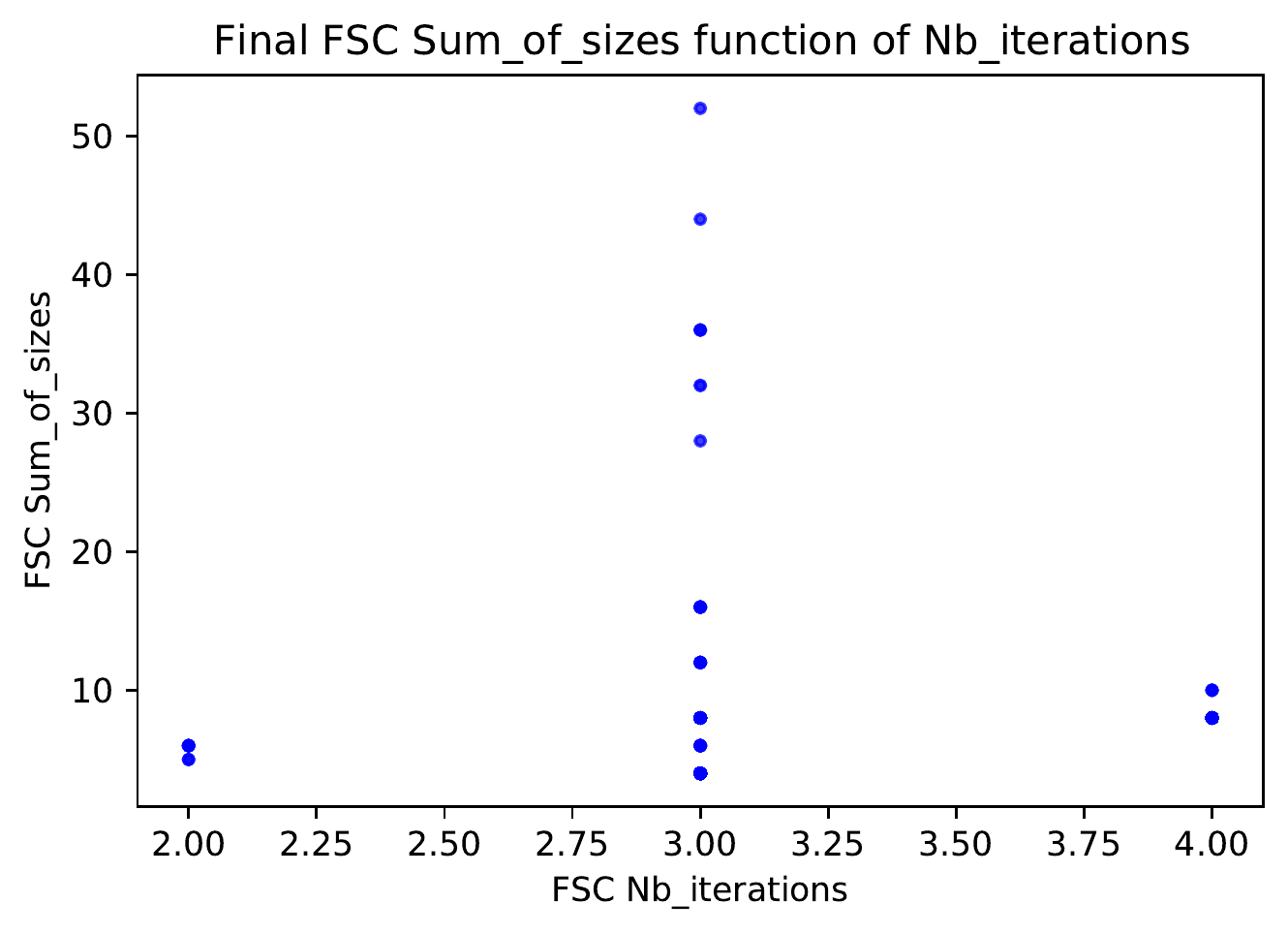}
     \end{minipage}
     \caption{Sum of the sizes of the final FSCs obtained with \infJESP[(R-$1_1$)] after convergence as a function of the required number of iterations for the DecTiger, Grid and Recycling problems (from left to right).
}
     \label{Figure:Final_FSCSize_Iteration}
\end{figure*}

\def\scaleg{0.99}
\def\scalem{0.32}
\begin{figure*}\begin{minipage}{\scalem\linewidth}
       \centering
       \includegraphics[width=\scaleg\textwidth]{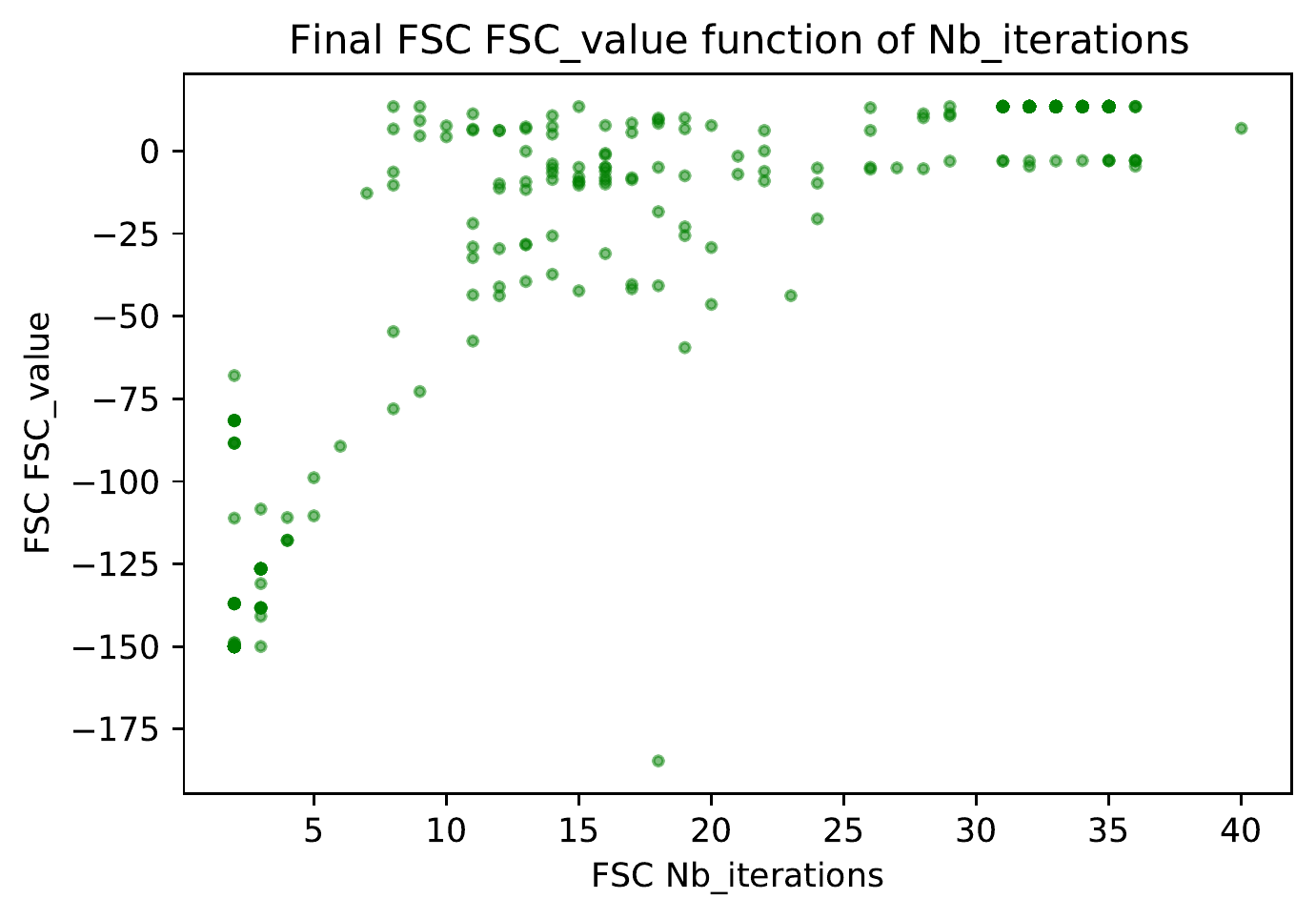}
     \end{minipage}
     \hfill
     \begin{minipage}{\scalem\linewidth}
       \centering
       \includegraphics[width=\scaleg\textwidth]{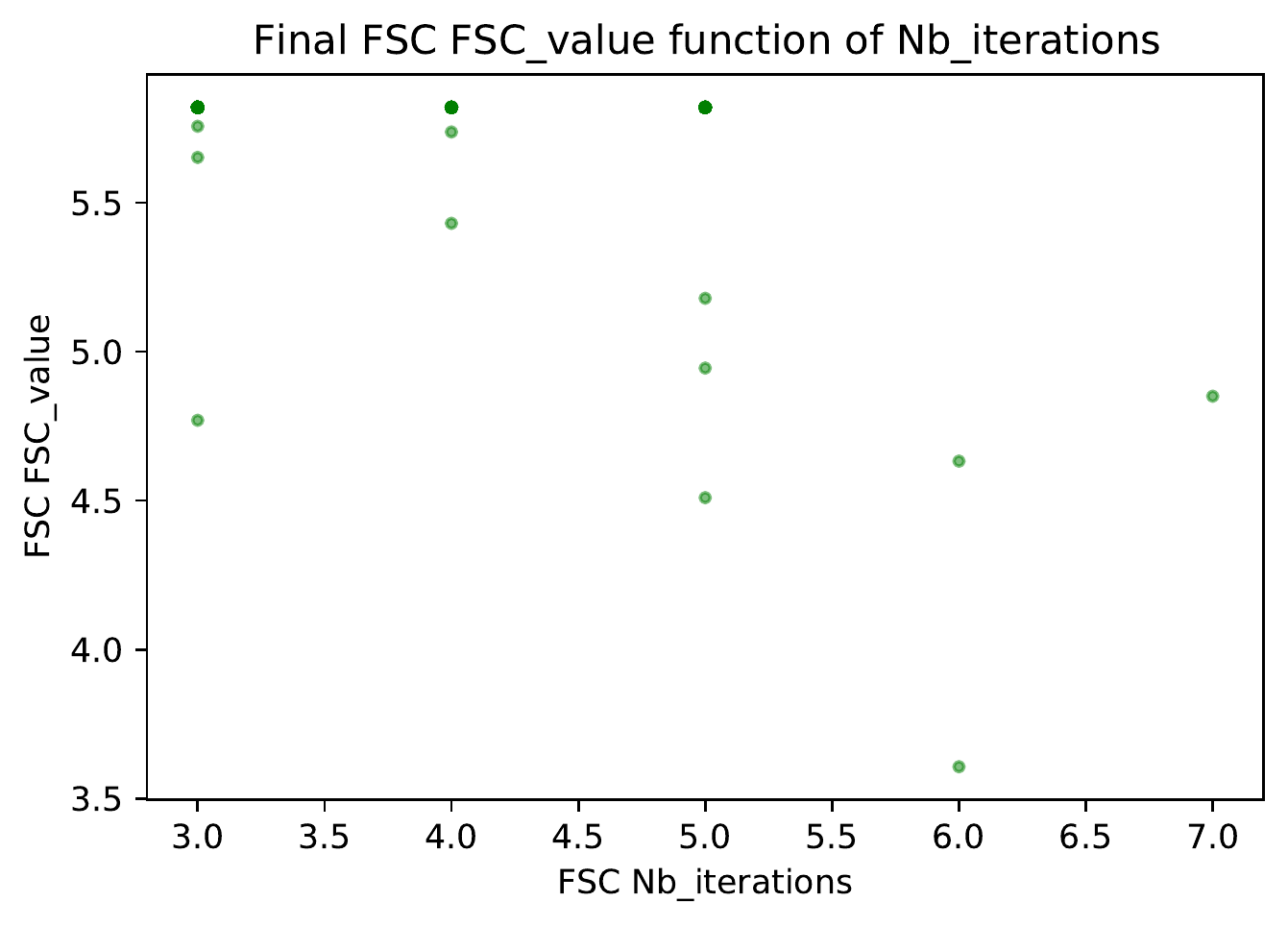}
     \end{minipage}
     \hfill
     \begin{minipage}{\scalem\linewidth}
       \centering
       \includegraphics[width=\scaleg\textwidth]{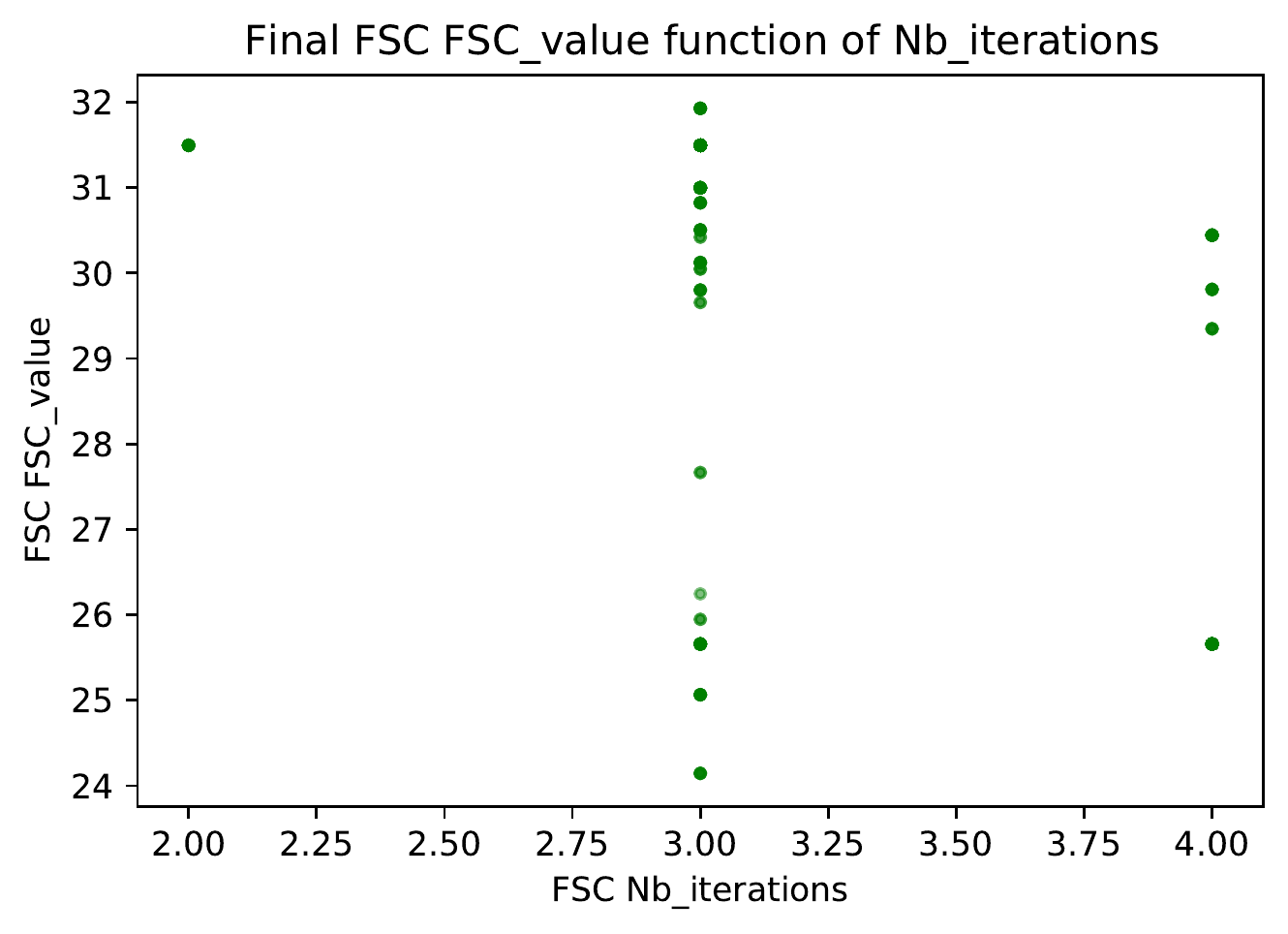}
     \end{minipage}
     \caption{Value of the final FSCs obtained with \infJESP[(R-$1_1$)] after convergence as a function of the required number of iterations for the DecTiger, Grid and Recycling problems (from left to right).
}
     \label{Figure:Final_Value_Iteration}
\end{figure*}

It must also be noted that {\em the sizes of the FSCs} do not monotonically increase during one \infJESP iteration (data not presented here). 
Sometimes the size of the FSC computed during one \infJESP improvement decreases, meaning that the solution to the best-response POMDP is a FSC with a smaller size but a higher value (as commonly observed in the DecTiger problem).
When looking at the FSCs obtained at the end of one \infJESP run, another observed phenomenon is that the more iterations required to reach the equilibrium, the smaller the sizes of the final FSCs (see Figure~\ref{Figure:Final_FSCSize_Iteration}).
But this does not mean that the associated value is higher (see Figure~\ref{Figure:Final_Value_Iteration}).

Finally, Figure~\ref{Figure:Final_Value_Size} presents the values of the obtained equilibria as a function of the sum of sizes of the FSCs obtained by \infJESP[(R-$1_1$)] after convergence.
It can be observed that small FSCs seem to be sufficient to generate a high value, opening new directions on combining \infJESP with FSC compression.

\newpage

\onecolumn

~~~ 

\def\scaleg{0.99}
\def\scalem{0.32}
\begin{figure}\begin{minipage}{\scalem\linewidth}
       \centering
       \includegraphics[width=\scaleg\textwidth]{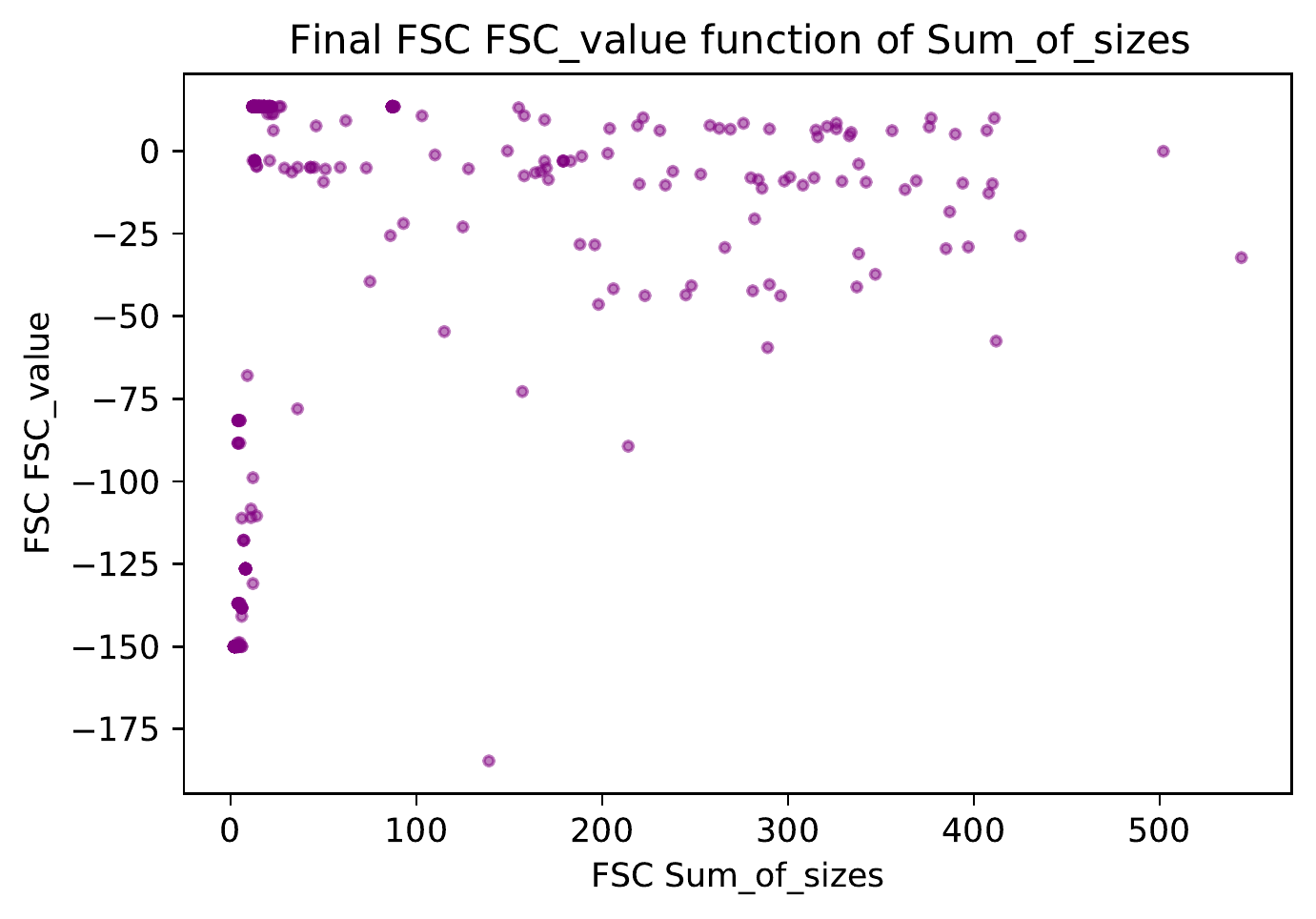}
     \end{minipage}
     \hfill
     \begin{minipage}{\scalem\linewidth}
       \centering
       \includegraphics[width=\scaleg\textwidth]{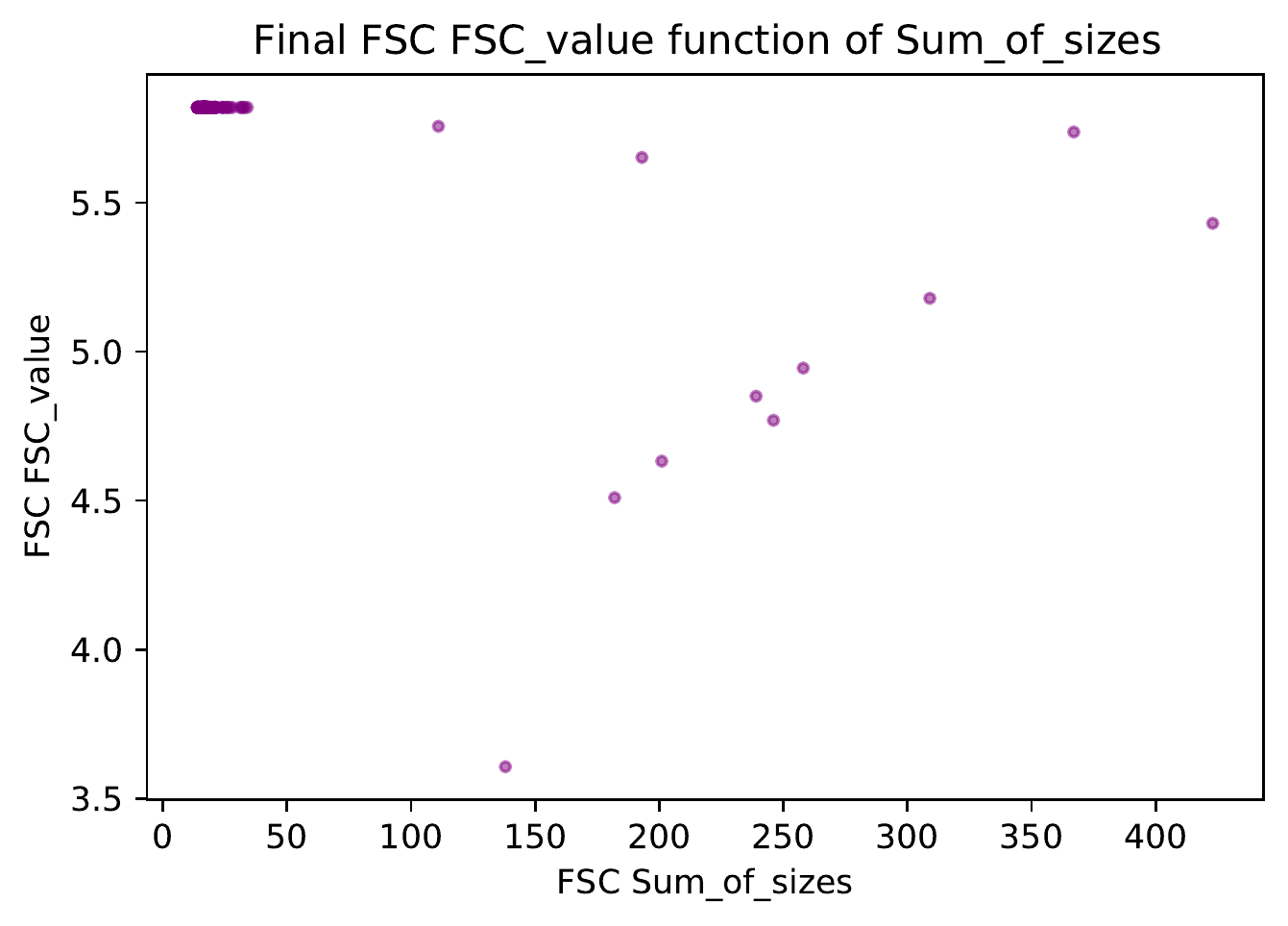}
     \end{minipage}
     \hfill
     \begin{minipage}{\scalem\linewidth}
       \centering
       \includegraphics[width=\scaleg\textwidth]{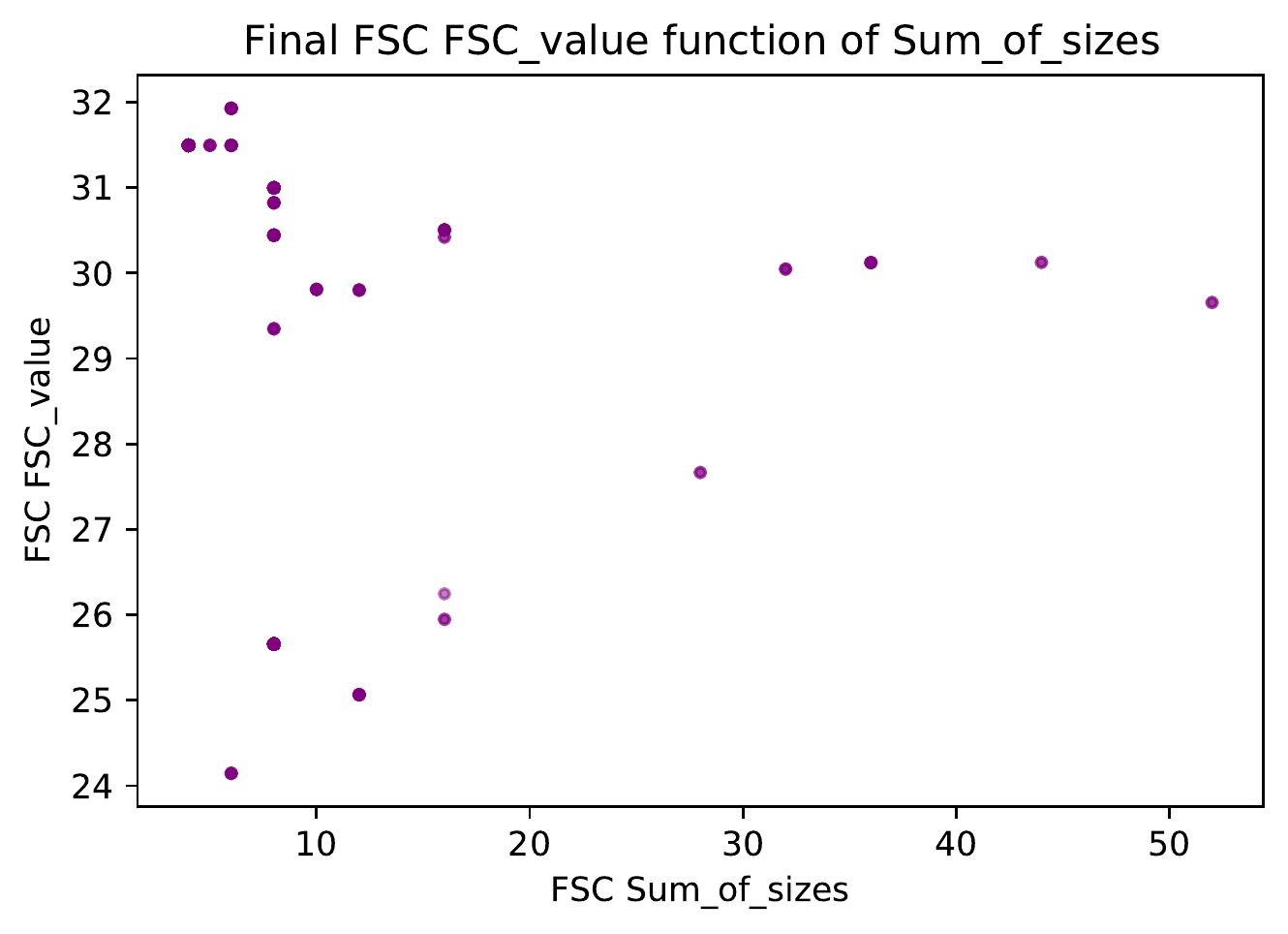}
     \end{minipage}
     \caption{Value of the final FSCs obtained with \infJESP[(R-$1_1$)] after convergence as a function of the sum of the sizes of final FSCs for the DecTiger, Grid and Recycling problems (from left to right).
     }
     \label{Figure:Final_Value_Size}
\end{figure}

 \fi

\end{document}